\newcommand\cactus[4]{%
\begin{tikzpicture}[scale=.75]
	\begin{axis}[scatter/classes={#3},
		mark size=1.5pt,
		ylabel=CPU Time,
		xlabel=#1,
		legend style={ at={(0.02,.98)}, font=\scriptsize, anchor=north west},
		]

		\foreach \bidule in #4 {

		\addplot[scatter, scatter src=explicit symbolic] coordinates {\bidule};

		}
		\legend{#2}
	\end{axis}
\end{tikzpicture}
}
\title{Finding Robust Solutions to Stable Marriage}
\author{
Begum Genc\textsuperscript{1}, 
Mohamed Siala\textsuperscript{1},  
Gilles Simonin\textsuperscript{2},
Barry O'Sullivan\textsuperscript{1}\\
 \textsuperscript{1} Insight, Centre for Data Analytics, Department of Computer Science, University College Cork, Ireland\\
 \textsuperscript{2} TASC, Institut Mines Telecom Atlantique, LS2N UMR 6004, Nantes, France\\
 \{begum.genc, mohamed.siala, barry.osullivan\}@insight-centre.org, gilles.simonin@imt-atlantique.fr
}
\newtheorem{theorem}{Theorem}
\newtheorem{lemma}{Lemma}
\newtheorem{definition}{Definition}
\newtheorem{corollary}{Corollary}
\def\man#1{m_{#1}}
\def\woman#1{w_{#1}}
\def\pair#1#2{\langle \man{#1}, \woman{#2} \rangle}
\def\noLBLpair#1#2{\langle {#1}, {#2} \rangle}
\def\x#1#2{x_{{{#1}},{{#2}}}} 
\def\mbest{M_0}
\def\wbest{M_z}
\def\ro#1{\rho_{#1}}
\def\smup#1{M_{\textsc{UP}}^{*#1}}
\def\smdown#1{M_{\textsc{DOWN}}^{*#1}}
\def\sManUp#1{S_{\textsc{UP}}^{*#1}}
\def\sManDown#1{S_{\textsc{DOWN}}^{*#1}}
\def\sm#1{M_{#1}}
\def\s#1{s_{#1}}
\def\y#1#2{y_{#1}^{#2}}
\def\z#1#2{z_{#1}^{#2}}
\def\sUp#1#2{s_{up_{#1}}^{#2}}
\def\sDown#1#2{s_{down_{#1}}^{#2}}
\def\dup#1{d_{up}^{#1}}
\def\ddown#1{d_{down}^{#1}}
\def\roset{{\cal{V}}}
\def\preT#1{{{N}_t^-}(#1)}
\def\sucT#1{{{N}_t^+}(#1)}
\def\preNT#1{{{N}^-}(#1)}
\def\sucNT#1{{{N}^+}(#1)}
\def\roelt#1#2{\rho_{e_{{#1},{#2}}}}
\def\roprd#1#2{\rho_{p_{{#1},{#2}}}}
\def\elt#1#2{{e_{{#1},{#2}}}}
\def\prd#1#2{{p_{{#1},{#2}}}}
\begin{document}

\maketitle

\begin{abstract}

We study the notion of robustness in stable matching problems. 
We first define robustness by introducing \textit{(a,b)-supermatches}.
An $(a,b)$-supermatch is a stable matching in which if any $a$ pairs break up it is possible to find another stable matching by changing the partners of those $a$ pairs and the partners of at most $b$ other pairs.
In this context, we define the most robust stable matching as a 
$(1,b)$-supermatch where b is minimum. 
We first show that checking whether a given stable matching is a $(1,b)$-supermatch can be done in polynomial time. 
Next, we use this procedure to design a constraint programming model, a local search approach, and a genetic algorithm to find the most robust stable matching. 
Our empirical evaluation on large instances shows that local search outperforms the other approaches.

%

\end{abstract}

\section{Introduction}

Heraclitus, the Greek philosopher is quoted as saying that ``Change is the only constant". 
Therefore, it is essential to build robust systems that can be repaired by only minor changes in case of an unforeseen event~\cite{Sussman07buildingrobust}.
Although it is usually difficult to provide robustness to a complex problem, as it may be computationally expensive, a robust solution reduces the cost of future repairs.

This paper focuses on matching problems under preferences, where the aim is to find an assignment between two disjoint sets of agents while respecting an optimality criterion. 
Each agent has an ordinal preference ranking over agents of the other set. 
These types of problems have been widely studied by different research communities such as computer scientists and economists over the years; in fact, the 2012 Nobel Prize for Economics was awarded to Shapley and Roth for their work on stable allocations.
Some of the variants can be listed as assigning residents to hospitals (HR), matching men and women to find stable marriages (SM)~\cite{GaleShapley1962CollegeAdmStabilityMarriage,Gusfield:1989:SMP:68392}, and finding donors for kidney patients~\cite{RePEc:wpa:wuwpga:0408001}.

Stable Marriage (SM)~\cite{GaleShapley1962CollegeAdmStabilityMarriage} is the first and the most studied variant of these problems. In SM, the sets of agents correspond to men and woman. The goal is to find a matching $M$ between men and women where each person is matched to at most one partner from the opposite sex such that there is no man and woman that prefers each other to their situations in $M$. 
Such a matching is called \textit{stable}.
We primarily work on the Stable Marriage problem, but the problem is also meaningful in the context of other matching problem variants.

We introduce the notion of $(a,b)$-supermatches as a measure of robustness for SM. 
Informally, a stable matching $M$ is called an $(a,b)$-supermatch if any $a$ agents decide to break their matches in $M$, thereby breaking $a$ pairs, it is possible to ``repair'' $M$ (i.e., find another stable matching) by changing the assignments of those $a$ agents and the assignments of at most $b$ others. 
This concept is inspired by the notion of $(a,b)$-supermodels in boolean 
 satisfiability~\cite{Ginsberg98supermodelsand} and super solutions in constraint programming~\cite{04-hebrard1,DBLP:conf/cpaior/HebrardHW04,07-hebrard-phd}.
Note that, if one agent wants to break his/her current match,
 the match of the partner breaks also. 
Therefore, when we mention $a$ (or $b$) as the number of agents, we always refer to the agents from the same set.

There exist different definitions for robustness that leads researchers to ambiguity~\cite{DBLP:journals/constraints/Climent15}.
We take the definitions of supermodels and super solutions as reference and use the term \textit{robust} for characterising repairable stable matchings ~\cite{Ginsberg98supermodelsand,DBLP:conf/cpaior/HebrardHW04}.

In order to give additional insight into the problem, we motivate robustness on the Hospital/Residents (HR) problem. 
The HR problem is a one-to-many generalization of SM.
In HR, each hospital has a capacity and a preference list in which they rank the residents. 
Similarly, each resident has a preference list over the hospitals. 
A $(1,b)$-supermatch means that if a resident wants to leave his assigned hospital or a hospital does not want to have one of its current residents, it is possible to move that resident to another hospital by also moving at most $b$ other residents to other hospitals. 
By minimising $b$, we can ensure that the required number of additional relocations to provide a repair is minimal and therefore the solution is robust.
In practice, the most robust matching minimises the cost for recovering from unwanted and unforeseen events.

The first contribution of this paper is a polynomial time procedure to verify whether a given stable matching is a $(1,b)$-supermatch. Next, based on this procedure, we design a constraint programming (CP) model, as well as local search (LS) and genetic algorithm (GA) to find the most robust stable matching.
Last, we give empirical evidence that the local search algorithm is by far the most efficient approach to tackle this problem. 

The structure of the paper is as follows: in Section~\ref{sec:bgAndNotations}, the basics of the Stable Marriage problem and the different notations are introduced. 
In Section~\ref{sec:verification}, our polynomial-time method is proposed.
Next, we give the CP model in Section~\ref{sec:CP} and the two meta-heuristic algorithms in Section~\ref{sec:Metaheuristics}.
Finally, we give our experimental study in Section~\ref{sec:Experiments}.

\section{Stable Marriage}
\label{sec:bgAndNotations}

The Stable Marriage problem takes as input a set of men $U = \{\man{1}, \man{2},\ldots, \man{n_1} \}$ and a set of woman $W = \{\woman{1}, \woman{2},\ldots, \woman{n_2} \}$ where each person has an ordinal preference list over people of the opposite sex. 
For the sake of simplicity we suppose in the rest of the paper that $n_1=n_2$ (denoted by n), and that each person expresses a complete preference ranking over the set of opposite sex. 
In the rest of the paper we interchangeably use $i$ to denote man $\man{i}$ in $U$ or similarly $j$ to denote woman $\woman{j}$ in $W$.

A \textit{matching} $M$ is a one-to-one correspondence between $U$ and $W$. 
For each man $\man{i}$, $M(\man{i})=\woman{j}$ is called the partner of $\man{i}$ in matching $M$. 
In the latter case, we denote by $M(\woman{j})=\man{i}$.
We shall sometimes abuse notation by considering $M$ as a set of pairs. 
In that case, a pair $\pair{i}{j} \in M$ iff $M(\man{i})=\woman{j}$.  
A pair $\pair{i}{j}$ is said to be \textit{blocking} a matching $M$
 if $\man{i}$ prefers $\woman{j}$ to $M(\man{i})$ and $\woman{j}$ prefers $\man{i}$ to $M(\woman{j})$. 
A \textit{matching} $M$ is called \textit{stable} if there exists no blocking pairs for $M$.  
A \textit{pair} $\pair{i}{j}$ is said to be \textit{stable} if it appears in a stable matching. 
Also, a pair $\pair{i}{j}$ is \textit{fixed} if $\pair{i}{j}$ appears in every stable matching.


The structure that represents all stable matchings forms a \textit{lattice} $\mathscr{M}$.
In this lattice, the man-optimal matching is denoted by $\mbest$ and the woman-optimal (man-pessimal) matching is denoted by $\wbest$.
A stable matching $\sm{i}$ \textit{dominates} a stable matching $\sm{j}$, denoted by $\sm{i} \preceq \sm{j}$, if every man prefer their matches in $\sm{i}$ to $\sm{j}$ or indifferent between them.
The size of a lattice can be exponential as the number of all stable matchings can be exponential~\cite{Irving:1986:CCS:14821.14824}. Therefore, making use of this structure for measuring the robustness of a stable matching is not practical.

Table~\ref{table:sm} gives an example of a Stable Marriage instance with $7$ men/women. 
For the sake of clarity, we denote each man $m_i$ with $i$ and each woman $w_j$ with $j$.


\begin{table}[ht]
\centering
\begin{tabular}{|l|l|l|l|l|l|l|l|l|l|l|l|l|l|l|l|l|}
\cline{1-8} \cline{10-17}
$m_0$ & \multicolumn{7}{l|}{0 6 5 2 4 1 3} &  & $w_0$ & \multicolumn{7}{l|}{2 1 6 4 5 3 0} \\ \cline{1-8} \cline{10-17} 
$m_1$ & \multicolumn{7}{l|}{6 1 4 5 0 2 3} &  & $w_1$ & \multicolumn{7}{l|}{0 4 3 5 2 6 1} \\ \cline{1-8} \cline{10-17} 
$m_2$ & \multicolumn{7}{l|}{6 0 3 1 5 4 2} &  & $w_2$ & \multicolumn{7}{l|}{2 5 0 4 3 1 6} \\ \cline{1-8} \cline{10-17} 
$m_3$ & \multicolumn{7}{l|}{3 2 0 1 4 6 5} &  & $w_3$ & \multicolumn{7}{l|}{6 1 2 3 4 0 5} \\ \cline{1-8} \cline{10-17} 
$m_4$ & \multicolumn{7}{l|}{1 2 0 3 4 5 6} &  & $w_4$ & \multicolumn{7}{l|}{4 6 0 5 3 1 2} \\ \cline{1-8} \cline{10-17} 
$m_5$ & \multicolumn{7}{l|}{6 1 0 3 5 4 2} &  & $w_5$ & \multicolumn{7}{l|}{3 1 2 6 5 4 0} \\ \cline{1-8} \cline{10-17} 
$m_6$ & \multicolumn{7}{l|}{2 5 0 6 4 3 1} &  & $w_6$ & \multicolumn{7}{l|}{4 6 2 1 3 0 5} \\ \cline{1-8} \cline{10-17} 
\end{tabular}

\caption{Preference lists for men (left) and women (right) for a sample Stable Marriage instance of size 7.}
\label{table:sm}
\end{table}

Let $M$ be a stable matching. 
A \textit{rotation} $\ro{} = (\pair{i_0}{j_0}, \pair{i_1}{j_1}, \ldots, \pair{i_{l-1}}{j_{l-1}})$ (where $l \in \mathbb{N}^*$) is an ordered list of pairs in $M$ such that changing the partner of each man $\man{i_c}$ to the partner of the next man $\man{i_{c+1}}$, where $0 \leq c \leq l-1$ and the operation +1 is modulo $l$, in the list $\ro{}$ leads to a stable matching $M / \ro{}$.
The latter is said to be obtained after \textit{eliminating} $\ro{}$ from $M$.
In this case, we say that $\pair{l_i}{l_i}$ is eliminated by $\ro{}$, $\pair{l_i}{l_{i+1}}$ is produced
 by $\ro{}$, and that $\ro{}$ is \textit{exposed} in $M$. 

For each pair $\pair{i}{j} \in \ro{}$, we say $\man{i}$ (or $\woman{j}$) is \textit{involved} in $\ro{}$.
Additionally, for each $\pair{i}{j}$ such that $\pair{i}{j} \notin \in \mbest$, 
there exists a unique rotation $\ro{\prd{i}{j}}$ that produces $\pair{i}{j}$.
Similarly, if $\pair{i}{j} \notin \wbest$ there is a unique rotation $\roelt{i}{j}$
 that eliminates $\pair{i}{j}$.
Note that it is always the case that $M$ (strictly) dominates $M / \ro{}$.


There exists a partial order for rotations. 
A rotation  $\rho'$ is said to precede another rotation $\rho$ (denoted by $\rho'  \prec\prec  \rho $),
if $\rho'$ is eliminated in every sequence of eliminations that starts at $\mbest$ and ends at a stable matching in which $\rho$ is exposed~\cite{Gusfield:1989:SMP:68392}.
Note that this relation is transitive, that is, $\rho''  \prec\prec \rho' \wedge \rho'  \prec\prec  \rho  \implies \rho'' \prec\prec  \rho $. 
Two rotations are said to be \textit{incomparable} if one does not precede the other.  
The structure that represents all rotations and their partial order is a directed graph called \textit{rotation poset} denoted by $\Pi = (\roset, E)$. Each rotation corresponds to a vertex in $\roset$ and there exists an edge from $\rho'$ to $\rho$ if $\rho'$ precedes $\rho$. 
The number of rotations is bounded by $n(n-1)/2$ and the number of arcs is $O(n^2)$~\cite{Gusfield:1989:SMP:68392}. 
It should be noted that the construction of $\Pi$ can be done in $O(n^2)$.

Predecessors of a rotation $\ro{}$ in a rotation poset are denoted by $\preNT{\ro{}}$ and successors are denoted by $\sucNT{\ro{}}$.
Later, we shall need transitivity to complete these lists. 
Therefore, we denote by $\preT{\ro{}}$ (respectively $\sucT{\ro{}}$)
the predecessors (respectively successors) of a rotation $\ro{}$ including transitivity.

In Figure~\ref{figure:lattice}, we give the lattice of all stable matchings of the instance given in Table~\ref{table:sm}. There exists two vectors for each stable matching. The first vector represents the set of men and the second vector represents the partner of each man in the matching. 		
Each edge of the form $M$ to $M'$ on the lattice is labeled with the rotation $\rho$ such that $M'$ is obtained after exposing $\rho$ on $M$.
All the rotations are given in Figure~\ref{figure:closedSubset}. The latter represents the rotation poset of this instance.

\begin{figure}[ht]
    \centering
 	\includegraphics[width=.45\textwidth]{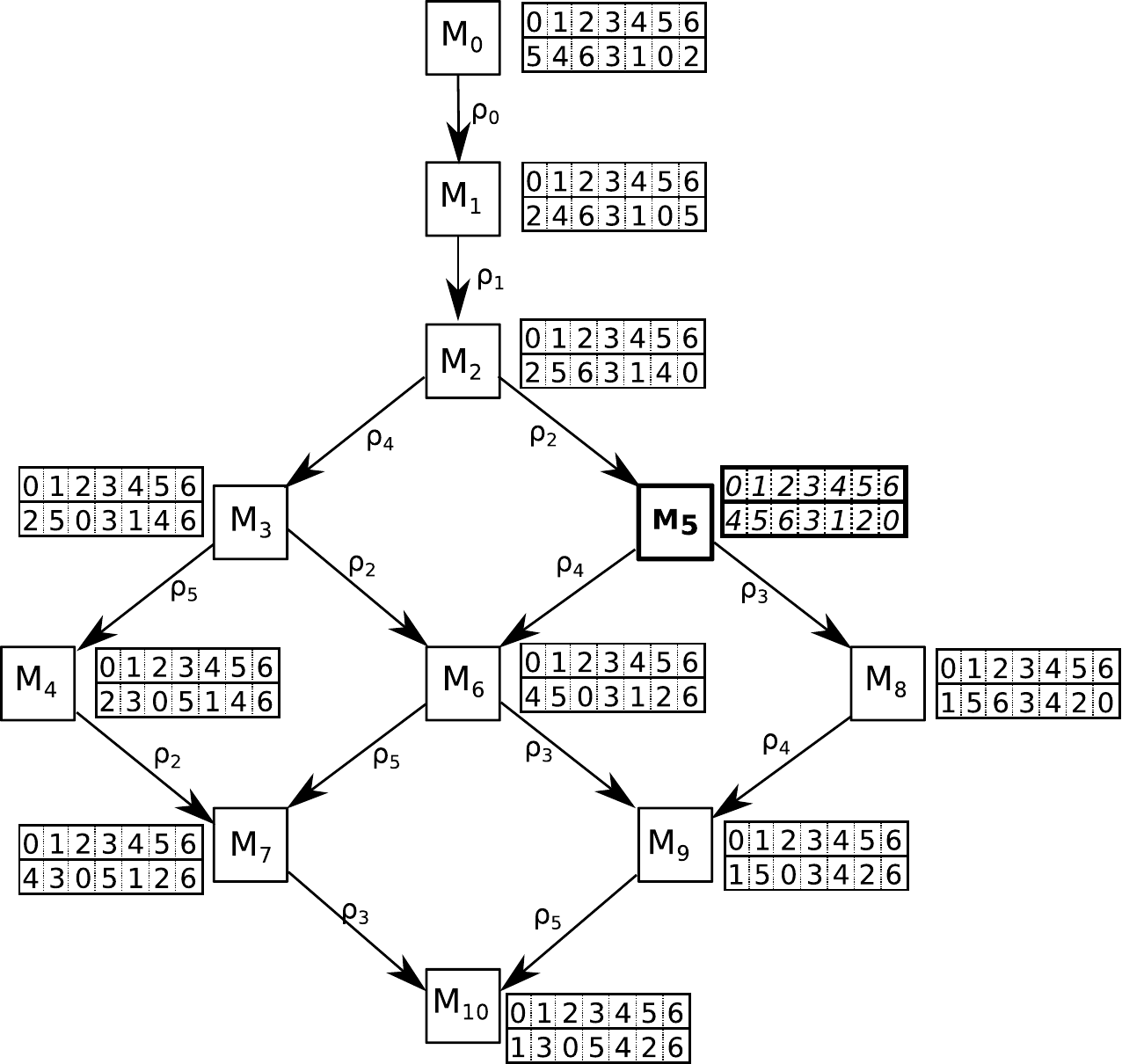}
    \caption{The lattice of all stable matchings corresponding to the instance given in Table~\ref{table:sm}.}
    \label{figure:lattice}
\end{figure}

\begin{figure}[ht]
    \centering
 	\includegraphics[width=.35\textwidth]{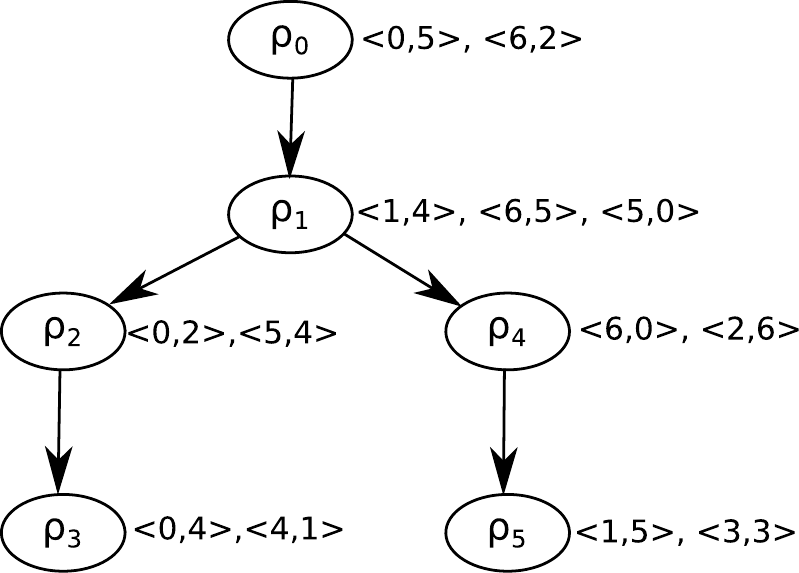}
    \caption{Rotation poset of the instance given in Table~\ref{table:sm}.}
    \label{figure:closedSubset}
\end{figure}

A \textit{closed subset} $S$ is a set of rotations such that for any rotation $\rho$ in $S$, if there exists a rotation $\rho' \in \roset $ that precedes $\rho$ then $\rho'$ is also in $S$. By Theorem~\ref{GFtheo257}, every closed subset in the rotation poset corresponds to a stable matching.
We denote by $X(S)$ the set of men that are included in at least one of the rotations in $S$.

Below is a theorem and a corollary from ~\cite{Gusfield:1989:SMP:68392} mainly used in some proofs in the next section. 


 \begin{theorem}[Theorem 2.5.7] 
 i) There is a one-one correspondence between the closed subsets of $\Pi$ and the stable matchings of $\mathscr{M}$.\\
ii) S is the closed subset of rotations of $\Pi$ corresponding to a stable matching $M$ if and only if $S$ is the (unique) set of rotations on every $\mbest$-chain 
in $\mathscr{M}$ ending at $M$. Further, $M$ can be generated from $\mbest$ by eliminating the rotations in their order along any of these paths, and these are the only ways to generate $M$ by rotation eliminations starting from $\mbest$.\\
 iii) If $S$ and $S'$ are the unique sets of rotations corresponding to distinct stable matchings $M$ and $M'$, then $M$ dominates $M'$ if and only if $S \subset S'$.
 \label{GFtheo257}
 \end{theorem}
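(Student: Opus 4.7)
The plan is to prove the three parts in the order (ii), (i), (iii), since (ii) supplies the combinatorial machinery that the other two parts reuse. Throughout I would rely on the precedence relation $\prec\prec$ on $\roset$ and the fact that any rotation elimination strictly descends in the dominance order on $\mathscr{M}$.

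For (ii), I would proceed by induction on the length of an $\mbest$-chain from $\mbest$ to $M$. The key ingredient is a diamond/confluence lemma: whenever two distinct rotations $\rho$ and $\rho'$ are both exposed at some intermediate matching $N$, then $\rho'$ is still exposed in $N/\rho$, symmetrically $\rho$ is still exposed in $N/\rho'$, and $(N/\rho)/\rho' = (N/\rho')/\rho$. With confluence in hand, any two $\mbest$-chains ending at $M$ can be transformed into one another by adjacent swaps, so they must involve exactly the same set of rotations. This common set is closed under $\prec\prec$, because the definition of $\prec\prec$ says precisely that $\rho'$ must be eliminated along every chain reaching a matching exposing $\rho$.

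For (i), injectivity is immediate from (ii), since $M$ is reconstructed from $S$ by eliminating the rotations of $S$ in any linear extension of $\prec\prec$ restricted to $S$, starting from $\mbest$. The main obstacle is surjectivity: given a closed subset $S$, I would build a matching greedily, maintaining an already-eliminated set $T \subseteq S$ and the current matching $N = \mbest/T$. At each step I would pick a $\prec\prec$-minimal rotation $\rho$ in $S \setminus T$. The critical lemma is that such a $\rho$ is necessarily exposed in $N$: all of $\preT{\rho}$ lie in $T$ by closure of $S$ and minimality of $\rho$, and conversely, once all predecessors of $\rho$ have been eliminated, $\rho$ must become exposed. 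This converse direction is the dual of the defining property of $\prec\prec$ and is the real technical heart of the argument. Iterating until $T = S$ terminates at a stable matching whose associated closed subset is exactly $S$ by (ii).

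For (iii), the forward direction uses the lattice structure of $\mathscr{M}$: if $M$ strictly dominates $M'$, there is an $\mbest$-chain passing through $M$ and ending at $M'$, so by (ii) the set of rotations reaching $M$ is contained in the set reaching $M'$, with strict inclusion by distinctness of the matchings. Conversely, if $S \subset S'$, the greedy construction of (i) first reaches $M$ after eliminating $S$, then continues by eliminating the rotations of $S' \setminus S$ in any $\prec\prec$-extension, each step descending strictly in the dominance order and ending at $M'$. The hardest technical step, as noted above, is the bidirectional characterization of exposedness in terms of $\prec\prec$-predecessors; once that is established, the three statements follow by clean inductive bookkeeping on chain length.
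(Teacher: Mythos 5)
You should first note that the paper does not prove this statement at all: it is quoted verbatim, with its original numbering, as Theorem 2.5.7 of Gusfield and Irving's monograph \emph{The Stable Marriage Problem: Structure and Algorithms}, and the authors use it purely as an imported black box. So there is no in-paper proof to compare against; your proposal can only be judged as a standalone reconstruction of the classical argument.

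As such a reconstruction, your plan follows the standard route and correctly locates where the difficulty lies, but it does not discharge the two lemmas on which everything rests. The diamond/confluence claim (two distinct rotations exposed at the same matching commute, and neither elimination destroys the other's exposedness) is exactly Gusfield--Irving's Lemma~2.5.1 and requires a concrete argument about how eliminating $\rho$ changes the reduced preference lists: one must check that the cyclic structure witnessing $\rho'$ survives, which is not obvious since eliminating $\rho$ deletes entries from several lists. Likewise, the ``converse of $\prec\prec$'' you invoke for surjectivity --- that once every predecessor of $\rho$ has been eliminated, $\rho$ becomes exposed --- is not the dual of the definition by mere symbol-pushing; the definition of $\prec\prec$ only tells you which rotations \emph{must} be eliminated before $\rho$ can appear, and one still has to show that no \emph{other} rotation is ever needed, i.e.\ that the set of rotations exposed at $\mbest/T$ for a closed $T$ is precisely the set of $\prec\prec$-minimal elements of $\roset\setminus T$. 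In Gusfield--Irving this is extracted from a chain of structural lemmas (2.5.2--2.5.6) about which pairs each rotation eliminates and produces, together with Corollary~\ref{GFcoro321} (each pair lies in at most one rotation). Your bookkeeping for parts (i)--(iii) given these lemmas is fine, and the order (ii), (i), (iii) is sensible; but as written the proposal asserts rather than proves the two statements that constitute essentially all of the content of the theorem, so it should be regarded as an outline rather than a proof.
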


 \begin{corollary}
 [Corollary 3.2.1] 
 Every man-woman pair $\noLBLpair{m}{w}$ is in at most one rotation. Hence there are at most $n(n-1)/2$ rotations in an instance of the Stable Marriage problem of size n.
 \label{GFcoro321}
 \end{corollary}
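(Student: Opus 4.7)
The plan is to split the claim into a uniqueness part and a counting part, deriving each from the facts already recorded in the excerpt: namely the uniqueness of the eliminating rotation for pairs not in $\wbest$, and the strict domination $M \succ M/\rho$ for any rotation $\rho$ exposed in $M$.

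For the first part, I would argue that if $\pair{i}{j}$ appears in the ordered list defining a rotation $\rho$, then the cyclic shift associated with $\rho$ moves $\man{i}$ away from $\woman{j}$, so $\rho$ is an eliminating rotation for $\pair{i}{j}$. The text preceding the corollary already states that whenever $\pair{i}{j} \notin \wbest$ the eliminating rotation $\roelt{i}{j}$ is unique, so any two rotations containing $\pair{i}{j}$ must both equal $\roelt{i}{j}$ and therefore coincide. For $\pair{i}{j} \in \wbest$, I would show separately that $\pair{i}{j}$ belongs to no rotation at all: any rotation exposed in a stable matching $M$ produces an $M/\rho$ strictly dominated by $M$ from the men's viewpoint, and since $\wbest$ is man-pessimal no stable matching strictly dominated by $\wbest$ exists, so no rotation can be exposed in $\wbest$ and its pairs are never eliminated. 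The same conclusion is trivial for pairs that do not appear in any stable matching.

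For the counting bound, I need two auxiliary observations. First, every rotation has length at least $2$: a hypothetical length-$1$ rotation $(\pair{i_0}{j_0})$ would send $\man{i_0}$ back to $\woman{j_0}$ under the cyclic shift, yielding $M/\rho = M$ and contradicting strict domination. Second, by Part~(i) the pair-sets of distinct rotations are pairwise disjoint, and by the paragraph above no pair in $\wbest$ ever participates in any rotation. Since there are exactly $n$ pairs in $\wbest$ out of $n^{2}$ possible man-woman pairs, summing lengths over all rotations gives $2|\roset| \le \sum_{\rho} l(\rho) \le n^{2} - n$, so $|\roset| \le n(n-1)/2$.

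The only slightly delicate step I anticipate is the auxiliary claim that pairs of $\wbest$ belong to no rotation, since this is the bridge from the stated uniqueness of $\roelt{i}{j}$ to a full proof of both the ``at most one'' statement for every pair and the sharp $n(n-1)/2$ bound; once it is secured from the man-pessimality of $\wbest$ together with strict domination, the rest reduces to disjoint-union counting.
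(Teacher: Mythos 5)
First, a point of reference: the paper does not prove this statement at all --- it is imported verbatim as Corollary~3.2.1 of Gusfield and Irving --- so you are effectively being compared against the standard textbook proof rather than against anything in the paper. Your skeleton (distinct rotations have disjoint pair-sets; every rotation has length at least $2$; the $n$ pairs of $\wbest$ occur in no rotation; hence $2\vert\roset\vert \le n^2-n$) is exactly that standard proof, and the length-$\ge 2$ observation and the final count are fine. Be aware, though, that your first part is very nearly circular: ``every pair lies in at most one rotation'' and ``the eliminating rotation $\roelt{i}{j}$ is unique'' are essentially the same assertion, so deriving the former from the paper's preamble statement of the latter adds no content; it is admissible here only because the paper happens to state uniqueness first.

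The genuine gap is in your claim that pairs of $\wbest$ belong to no rotation. What your argument actually establishes is that no rotation is \emph{exposed in} $\wbest$ (true, by man-pessimality plus strict domination), but what you need is that no rotation \emph{contains} a pair of $\wbest$ --- and such a rotation would be exposed in some other stable matching $M\neq\wbest$ that happens to share that pair with $\wbest$, about which the man-pessimality of $\wbest$ says nothing. The correct argument is one of irreversibility: if $\pair{i}{j}$ lies in a rotation $\ro{}$ exposed in $M$, then $\man{i}$'s partner in $M/\ro{}$ is strictly worse than $\woman{j}$ (his partner changes, and $M$ dominates $M/\ro{}$); since the closed subset of $\wbest$ is all of $\roset$ and therefore contains that of $M/\ro{}$, Theorem~\ref{GFtheo257}(iii) gives that $M/\ro{}$ dominates $\wbest$, so $\man{i}$'s partner in $\wbest$ is weakly worse still, hence strictly worse than $\woman{j}$, and $\pair{i}{j}\notin\wbest$. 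With that substitution in place of your ``not exposed in $\wbest$'' step, the disjoint-union count goes through and yields the stated bound.
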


We introduce here a notion that is important to measure robustness. 
Let $\sm{i}$ and $\sm{j}$ be two stable matchings. 
The distance
$d(\sm{i}, \sm{j})$ is the number of men that have different partners in $\sm{i}$ and $\sm{j}$.
A matching $\sm{i}$ is closer to $\sm{j}$ than $\sm{k}$ if
$d(\sm{i}, \sm{j}) < d(\sm{i}, \sm{k}) $.
We give in Definition\ref{def:supersm} a formal definition of $(a,b)$-supermatch.

\begin{definition}{$(a,b)$-supermatch}
\label{def:supersm}
A stable matching $M$ is said to be $(a,b)$-supermatch if 
for any set $\Psi \subset M$ of $a$ stable pairs that are not fixed,
there exists a stable matching $M'$ such that 
$M' \cap \Psi = \emptyset$ and 
$d(M, M') -a  \leq b$.	
\end{definition}

In this paper, we focus on the $(1,b)$-supermatch case. 
Notice that for any stable matching $M$, there exists a value $b$ such that $M$ is a $(1,b)$-supermatch. 
In this case, we say that $b$ is the \textit{robustness value} of $M$.
The \textit{most robust stable matching} is a $(1,b)$-supermatch with the minimum robustness value.

\section{Checking $(1,b)$-supermatch in Polynomial Time}
\label{sec:verification}

A preliminary version of this section appeared in~\cite{DBLP:conf/aaai/Genc0OS17}.

In this section, we first recall the basics, and then show how to find or verify the closest stable matching of a given stable matching with an unwanted couple.
Throughout this section, we suppose that $M$ is a given stable matching, $S$ its corresponding closed subset, and $\pair{i}{j} \in M$ is a non-fixed (stable) pair to remove from $M$.	

The closest stable matching to $M$ that does not include $\pair{i}{j}$ is a matching $M^*$ in which either $\pair{i}{j}$ was eliminated or not produced in any sequence of rotation eliminations starting from $\mbest$ leading to $M^*$.
Hence, if $\pair{i}{j} \notin \mbest$ then $\roprd{i}{j} $ exists, 
and there is a set of stable matchings $S_u$, where each of them dominates $M$ and does not include the $\pair{i}{j}$.
Similarly, if $\pair{i}{j} \notin \wbest$ then $\ro{\elt{i}{j}}$ exists, 
and there is a set of stable matchings $S_d$, where each of them is dominated by $M$ and does not include the pair $\pair{i}{j}$.

If $\pair{i}{j} \notin \mbest$, we define a specific set of rotations $\sManUp{i}$ as follows\footnote{The brackets $()$ in Equations \ref{eq:up} and \ref{eq:down} are used to give the priority between the operators.}:
\begin{equation}
\label{eq:up}
\sManUp{i} = S \setminus ( \{\roprd{i}{j}\} \cup ( \sucT{\roprd{i}{j}} \cap S ) ).
\end{equation}

If $\pair{i}{j} \notin \wbest$, we define a specific set of rotations $\sManDown{i}$ as follows:
\begin{equation}
\label{eq:down}
\sManDown{i} = S \cup   \{\roelt{i}{j}\} \cup ( \preT{\roelt{i}{j}} \setminus S ) .
\end{equation}

Observe first that 
$\sManUp{i}$ and 
$\sManDown{i}$ are in fact closed subsets since $S$ is a closed subset. 
Let $\smup{i}$ (respectively $\smdown{i}$) be the stable matching corresponding to $\sManUp{i}$  (respectively $\sManDown{i}$). By construction, we have $\smup{i} \in S_u$ and $\smdown{i} \in S_d$. 
We show later that any stable matching $\sm{k} \notin \{\smup{i}, \smdown{i} \}$
 that does not include the pair  $\pair{i}{j}$ cannot be closer to $M$ than $\smup{i}$
 or $\smdown{i}$.

For illustration, consider the stable matching $\sm{5}$ of the previous example and its closed subset $S_5 = \{\ro{0}, \ro{1}, \ro{2} \}$. 
Table~\ref{table:allRepairs1} shows for each pair $\pair{i}{j}$
the matchings $\sManUp{i}$ and $\sManDown{i}$ if they exist.

\begin{table}[ht]
\centering
\renewcommand{\tabcolsep}{2pt}
\begin{tabular}{|c|c|c|c|c|}
\hline
$\pair{i}{j} $ & $\roprd{i}{j}$ & $\roelt{i}{j}$ & $\sManUp{i}$ & $\sManDown{i}$ \\ 
\hline
$\pair{0}{4} $ & $\ro{2}$ & $\ro{3}$ & $\{ \ro{0}$, $\ro{1} \}$ & $ \{ \ro{0}$, $\ro{1}$, $\ro{2}$, $\ro{3} \}$ \\ \hline
$\pair{1}{5} $ & $\ro{1}$ & $\ro{5}$ & $\{ \ro{0} \}$ & $\{ \ro{0}$, $\ro{1}$, $\ro{2}$, $\ro{4}$, $\ro{5} \}$ \\ \hline
$\pair{2}{6} $ & - & $\ro{4}$ & - & $\{ \ro{0}$, $\ro{1}$, $\ro{2}$, $\ro{4} \}$ \\ \hline
$\pair{3}{3} $ & - & $\ro{5}$ & - & $\{ \ro{0}$, $\ro{1}$, $\ro{2}$, $\ro{4}$, $\ro{5} \}$ \\ \hline
$\pair{4}{1} $ & - & $\ro{3}$ & - & $\{ \ro{0}$,$\ro{1}$, $\ro{2}$, $\ro{3} \}$ \\ \hline
$\pair{5}{2} $ & $\ro{2}$ & - & $\{ \ro{0}$, $\ro{1} \}$ & - \\ \hline
$\pair{6}{0} $ & $\ro{1}$ & $\ro{4}$ & $\{ \ro{0}$ \} & $ \{ \ro{0}$, $\ro{1}$, $\ro{2}$, $\ro{4} \}$ \\ 
\hline

\end{tabular}
\caption{The repair closed subsets $\sManUp{i}$ and $\sManDown{i}$ for $\sm{5}$.}
\label{table:allRepairs1}
\end{table}

Table~\ref{table:allRepairs2} shows the stable matchings corresponding to the closed subsets given in Table~\ref{table:allRepairs1} and the distances between the current stable matching $\sm{5}$ to each one of them.
The distances are denoted as $\dup{i}$ and $\ddown{i}$ in the table, where for each man $\man{i}$, $\dup{i} = d(\sm{5}, \smup{i})$ and $\ddown{i} = d(\sm{5}, \smdown{i})$, respectively. 
If $\smup{i}$ does not exist for a man $\man{i}$, then $\dup{i}$ is denoted by $\infty$ (the same value is used when $\smdown{m}$ does not exist). 
Last, $b_i = min(\dup{i}, \ddown{i}) -1$ represents the repair cost of each man. The reason for extraction of $1$ is because we are considering only $(1,b)$-supermatches (see Definition~\ref{def:supersm}).


\begin{table}[h!]
\centering
\begin{tabular}{|c|c|c|c|c|}
\hline
\multicolumn{1}{|l|}{$\smup{i}$} & \multicolumn{1}{l|}{$\smdown{i}$} & \multicolumn{1}{l|}{$\dup{i}$} & \multicolumn{1}{l|}{$\ddown{i}$} & \multicolumn{1}{l|}{$b_i$} \\ \hline
$\sm{2}$ & $\sm{8}$ & 2 & 2 & 1 \\ \hline
$\sm{1}$ & $\sm{7}$ & 4 & 4 & 3 \\ \hline
- & $\sm{6}$ & $\infty$ & 2 & 1 \\ \hline
- & $\sm{7}$ & $\infty$ & 4 & 3 \\ \hline
- & $\sm{8}$ & $\infty$ & 2 & 1 \\ \hline
$\sm{2}$ & - & 2 & $\infty$ & 1 \\ \hline
$\sm{1}$ & $\sm{6}$ & 4 & 2 & 1 \\ \hline
\end{tabular}
\caption{The repair stable matchings $\smup{i}$ and $\smdown{i}$ for each man in $\sm{5}$ and the distances between the original stable matching and the repair stable matchings.}
\label{table:allRepairs2}
\end{table}

The robustness of a stable matching is equal to the repair cost of the non-fixed man that has the worst repair cost $b = \sum_{i \in \{1...n\}} max(b_i)$. For the follow-up example, $\sm{5}$ is labelled as a $(1,3)$-supermatch. Moreover, Table~\ref{table:allSMRobustness} shows the $(1,b)$-robustness measures for each stable matching of the given sample. The most robust stable matching for the sample is identified as $\sm{6}$ since it has the smallest $b$ value. 

\begin{table}[ht]
\centering
\begin{tabular}{|c|c|c|}
\hline
SM $(\sm{k})$ & Closed Subset $(S_k)$ & Robustness $(b)$ \\ \hline
$\sm{0}$ & $\{  \}$ & 5 \\ \hline
$\sm{1}$ & $\{ \ro{0} \}$ & 4 \\ \hline
$\sm{2}$ & $\{ \ro{0}, \ro{1} \}$ & 3 \\ \hline
$\sm{3}$ & $\{ \ro{0}, \ro{1}, \ro{4} \}$ & 2 \\ \hline
$\sm{4}$ & $\{ \ro{0}, \ro{1}, \ro{4}, \ro{5} \}$ & 3 \\ \hline
$\sm{5}$ & $\{ \ro{0}, \ro{1}, \ro{2} \}$ & 3 \\ \hline
$\sm{6}$ & $\{ \ro{0}, \ro{1},  \ro{2}, \ro{4} \}$ & 1 \\ \hline
$\sm{7}$ & $\{ \ro{0}, \ro{1}, \ro{2}, \ro{4}, \ro{5} \}$ & 3 \\ \hline
$\sm{8}$ & $\{ \ro{0}, \ro{1}, \ro{2}, \ro{3} \}$ & 3 \\ \hline
$\sm{9}$ & $\{ \ro{0}, \ro{1}, \ro{2}, \ro{3}, \ro{4} \}$ & 2 \\ \hline
$\sm{10}$ & $\{ \ro{0}, \ro{1}, \ro{2}, \ro{3}, \ro{4}, \ro{5} \}$ & 3 \\ \hline
\end{tabular}
\caption{The robustness values of all stable matchings for the sample given in Table~\ref{table:sm}.}
\label{table:allSMRobustness}
\end{table}

We give few lemmas in order to show that the closest stable matching to $M$ when breaking the pair $\pair{i}{j}$ is either $\sManUp{i}$  or $\sManDown{i}$. 

 \begin{lemma} Given two incomparable rotations $\rho$ and $\rho'$, $X(\{\rho\}) \cap X(\{\rho'\}) = \emptyset$.
 \label{incomparableRotations}
 \end{lemma}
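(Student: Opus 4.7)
The plan is to argue by contradiction. Suppose some man $m$ lies in $X(\{\rho\}) \cap X(\{\rho'\})$, so that $\rho$ contains a pair $\noLBLpair{m}{w}$ and $\rho'$ contains a pair $\noLBLpair{m}{w'}$. First, I would rule out $w = w'$: by Corollary~\ref{GFcoro321}, each man--woman pair belongs to at most one rotation, so $w = w'$ would force $\rho = \rho'$, contradicting the hypothesis that $\rho$ and $\rho'$ are distinct incomparable rotations. Hence $w \neq w'$, and both $w$ and $w'$ are stable partners of $m$. Without loss of generality, assume $m$ prefers $w$ to $w'$.

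I would then show $\rho \prec\prec \rho'$, contradicting incomparability. By the definition of $\prec\prec$, this reduces to showing that in every elimination sequence starting from $\mbest$ and ending at a stable matching $M'$ in which $\rho'$ is exposed, the rotation $\rho$ is eliminated at some step. To see this, I would track the partner of $m$ along such a sequence: in $\mbest$ he is paired with his most preferred achievable partner, while in $M'$ (where $\rho'$ is exposed) he is paired with $w'$. The partner of $m$ changes only when a rotation containing $m$ is eliminated, and each such elimination moves him strictly down in his preference list. By Corollary~\ref{GFcoro321}, the stable pair $\noLBLpair{m}{w}$ belongs to exactly one rotation, namely $\rho$. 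Hence the monotone descent of $m$'s partner from his best achievable partner down to $w'$ cannot bypass $w$ without eliminating $\rho$ somewhere along the way, which gives $\rho \prec\prec \rho'$.

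The main obstacle is justifying that $m$'s partner really does pass through $w$, rather than skipping over it in a single rotation elimination. This amounts to the classical structural fact that any rotation involving $m$ transitions him between two partners that are \emph{consecutive} in the chain of $m$'s achievable partners. A self-contained way to pin this down uses Theorem~\ref{GFtheo257}: if some rotation transitioned $m$ directly from a partner $v$ to a partner $v''$ while $w$ lay strictly between $v$ and $v''$ in $m$'s preferences, one could exhibit either two distinct closed subsets of $\Pi$ yielding the same stable matching, or alternatively a failure of any closed subset to realise $\noLBLpair{m}{w}$, contradicting the one-to-one correspondence of Theorem~\ref{GFtheo257} together with the fact that $\noLBLpair{m}{w}$ is a stable pair (since $\rho$ contains it). Once this structural point is in place, the descent argument above closes the proof.
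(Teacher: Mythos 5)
Your proposal is correct in substance, but it takes a genuinely different route from the paper. The paper's own argument is a single informal sentence: it reads the claim directly off the intended meaning of incomparability (``incomparable rotations modify sets of men who do not require modifications from the other first''), which is closer to a restatement than a derivation. You instead prove the contrapositive of the classical Gusfield--Irving fact that the rotations involving a fixed man form a chain under $\prec\prec$: if $m$ lies in both $\rho$ and $\rho'$, Corollary~\ref{GFcoro321} forces the two pairs $\noLBLpair{m}{w}$ and $\noLBLpair{m}{w'}$ to be distinct, and the monotone descent of $m$'s partner along any elimination sequence from $\mbest$ then yields $\rho \prec\prec \rho'$ (assuming $w$ preferred to $w'$). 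This buys an actual proof where the paper offers only an appeal to intuition, at the cost of needing the structural fact you correctly isolate as the crux: that a rotation moves $m$ between \emph{consecutive} stable partners, so that the descent to $w'$ cannot skip $w$. Your sketched justification of that step via Theorem~\ref{GFtheo257} is the one place that remains loose; the cleanest way to close it is to consider the full elimination sequence from $\mbest$ to $\wbest$, in which every rotation is eliminated, hence every stable pair of $m$ (being either in $\mbest$ or produced by its unique producing rotation) appears at some point; combined with the fact that $m$'s partner only ever moves strictly down his list, his successive partners enumerate all his stable partners in preference order, so each rotation containing $m$ moves him to exactly the next one. Since the eliminated and produced pairs are intrinsic to a rotation, this consecutiveness transfers to the particular sequence ending at the matching exposing $\rho'$, and the pair $\noLBLpair{m}{w}$ can only be broken by its unique eliminating rotation $\rho$, completing your argument.
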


 \begin{proof}
By definition of incomparability, if two rotations are incomparable, it means that they modify a set of men who do not require modifications from the other first. Therefore the sets of men are distinct.
 \end{proof}

 \begin{lemma} Given three stable matchings $\sm{x}, \sm{y}$ and $\sm{z}$ 
such that $\sm{x} \preceq \sm{y} \preceq \sm{z}$, then $d(\sm{y}, \sm{z}) \leq d(\sm{x}, \sm{z})$ and $d(\sm{x}, \sm{y}) \leq d(\sm{x}, \sm{z})$.
 \label{closenessLemma}
 \end{lemma}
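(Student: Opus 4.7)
The plan is to translate the distance inequalities into cardinality inequalities on differences of closed subsets in the rotation poset, exploiting the bijection between stable matchings and closed subsets.

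First, I would apply Theorem~\ref{GFtheo257}(iii) to convert the dominance chain $\sm{x} \preceq \sm{y} \preceq \sm{z}$ into the inclusion chain $S_x \subseteq S_y \subseteq S_z$ of the corresponding closed subsets.

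Second, I would prove the intermediate identity that, whenever $M \preceq M'$ with closed subsets $S \subseteq S'$, one has $d(M, M') = |X(S' \setminus S)|$. By Theorem~\ref{GFtheo257}(ii), $M'$ can be reached from $\mbest$ by eliminating the rotations of $S'$ in any order consistent with $\prec\prec$; since $S$ is itself closed, I can choose such an order that first processes all of $S$ (producing $M$) and then processes $S' \setminus S$ (producing $M'$). Along this second segment, the fact that $M$ strictly dominates $M/\rho$ for each rotation $\rho$, combined with the observation that a rotation elimination changes partners only of the men involved, forces every involved man to move to a strictly less preferred partner. Hence a man's partner differs between $M$ and $M'$ if and only if he appears in at least one rotation of $S' \setminus S$, i.e.\ belongs to $X(S' \setminus S)$.

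Third, I finish by a routine set-theoretic calculation. From $S_x \subseteq S_y \subseteq S_z$ one obtains $S_z \setminus S_y \subseteq S_z \setminus S_x$ and $S_y \setminus S_x \subseteq S_z \setminus S_x$, so $X(S_z \setminus S_y) \subseteq X(S_z \setminus S_x)$ and $X(S_y \setminus S_x) \subseteq X(S_z \setminus S_x)$. Applying the intermediate identity to each pair then yields the two desired inequalities.

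The main obstacle I expect is the intermediate identity, and specifically the direction that rules out a man's partners in $M$ and $M'$ accidentally coinciding even though he is involved in some rotation of $S' \setminus S$. This is ruled out by the strict monotonicity observed above: along the elimination path from $M$ to $M'$, any man touched by a rotation strictly worsens his partner, so his final partner is strictly less preferred than his initial one and therefore distinct. The reverse direction is immediate since men appearing in no rotation of $S' \setminus S$ keep the same partner throughout the path.
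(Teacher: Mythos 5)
Your proof is correct, but it takes a different route from the paper's. The paper argues by contradiction: if $d(\sm{y},\sm{z}) > d(\sm{x},\sm{z})$, some man would have to keep his $\sm{x}$-partner in $\sm{z}$ while changing partner between $\sm{y}$ and $\sm{z}$, so his pair would have to be eliminated on the way from $\sm{x}$ to $\sm{y}$ and then re-produced on the way to $\sm{z}$; this is ruled out by Corollary~\ref{GFcoro321}, which says each man--woman pair lies in at most one rotation. You instead prove the exact identity $d(M,M') = |X(S' \setminus S)|$ for comparable matchings (via the strict worsening of every involved man's partner along a rotation elimination) and then reduce both inequalities to the trivial monotonicity $S_z \setminus S_y \subseteq S_z \setminus S_x$ and $S_y \setminus S_x \subseteq S_z \setminus S_x$. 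Your version is arguably preferable: it is constructive rather than by contradiction, it makes explicit the ordering argument (process the closed subset $S$ first, then $S'\setminus S$) that the paper leaves implicit, and the identity it establishes is exactly the one the paper later uses without proof in Lemma~\ref{lemma:incomparable} (where $d(\sm{c},\sm{})=|X(S\setminus S_c)|$ is asserted). The paper's proof is shorter but leans on Corollary~\ref{GFcoro321} in a way that requires the reader to reconstruct why re-matching a broken pair would force a second producing rotation; your strict-monotonicity argument closes that same gap more transparently.
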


 \begin{proof}
 Using the properties of domination and the closed subsets in Theorem~\ref{GFtheo257}, we can infer $S_{x} \subset S_{y} \subset S_{z}$. 

 Assume to the contrary that $d(M_{y}, M_{z}) > d(M_{x}, M_{z})$.
 This situation occurs only if a set of pairs that are present in $M_{x}$ are eliminated to obtain $M_{y}$ and then re-matched with the same partners they had in $M_{x}$ to get $M_{z}$. However, this contradicts Corollary~\ref{GFcoro321}. For similar reasons, $d(M_{x}, M_{y}) < d(M_{x}, M_{z})$.
\end{proof}

The case where stable matchings have the same distance such as $d(\sm{x}, \sm{y}) = d(\sm{x}, \sm{z})$, if the rotation set in the difference sets $S_y \setminus S_x$, $S_z \setminus S_x$, and $S_z \setminus S_y$ modify the same set of men. We can demonstrate the case where we have equal distance between the three matchings on a Stable Marriage instance of size $8$ given in Manlove's book on page 91~\cite{manlove2013algorithmics}. It can be verified that $\sm{8} \preceq \sm{15} \preceq \sm{21}$, where: \\
  $\sm{8} = \{\noLBLpair{1}{1},\noLBLpair{2}{3},\noLBLpair{3}{4},\noLBLpair{4}{8},
  \noLBLpair{5}{2},\noLBLpair{6}{5}, \noLBLpair{7}{6},\noLBLpair{8}{7}\}$, \\ 
  $\sm{15} = \{\noLBLpair{1}{5},\noLBLpair{2}{4},\noLBLpair{3}{3},\noLBLpair{4}{6},
  \noLBLpair{5}{8},\noLBLpair{6}{7}, \noLBLpair{7}{2},\noLBLpair{8}{1}\}$, \\ 
  $\sm{21} = \{\noLBLpair{1}{7},\noLBLpair{2}{8},\noLBLpair{3}{2},\noLBLpair{4}{1},
  \noLBLpair{5}{6},\noLBLpair{6}{4},\noLBLpair{7}{3},\noLBLpair{8}{5}\}$.

  In this case, due to the fact that $X(S_{15} \setminus S_{8}) = X(S_{21} \setminus S_{15}) = X(S_{21} \setminus S_{8})$, the distances of all three matchings will be equal to each other $d(M_{8}, M_{15}) = d(M_{15}, M_{21}) = d(M_{8}, M_{21}) = 8$.

\begin{lemma} 
\label{lemma:dominating}
 If there exists a stable matching $\sm{x}$ that does not contain $\pair{i}{j}$, dominates $M$ and different from $\smup{i}$, then $\sm{x}$ dominates $\smup{i}$.
\end{lemma}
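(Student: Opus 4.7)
The plan is to translate everything into the lattice of closed subsets and exploit Theorem~\ref{GFtheo257}, so that ``$\sm{x}$ dominates $\smup{i}$'' becomes simply ``$S_x \subset \sManUp{i}$''.

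First I would use Theorem~\ref{GFtheo257}(iii) on the assumption that $\sm{x}$ dominates $M$ to obtain $S_x \subseteq S$. Next, since $\pair{i}{j} \in M$ but $\pair{i}{j} \notin \sm{x}$ and $\pair{i}{j} \notin \mbest$, the pair is produced along any $\mbest$-chain ending at $M$ by the unique rotation $\roprd{i}{j}$; by Theorem~\ref{GFtheo257}(ii) this rotation must lie in $S$ but not in $S_x$, so $\roprd{i}{j} \in S \setminus S_x$.

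The key observation is then a closedness argument: because $S_x$ is a closed subset, no rotation $\rho' \in \sucT{\roprd{i}{j}}$ can belong to $S_x$; otherwise closedness would force its predecessor $\roprd{i}{j}$ into $S_x$, a contradiction. Combining this with $S_x \subseteq S$ and $\roprd{i}{j} \notin S_x$ yields
\[
S_x \;\subseteq\; S \setminus \bigl( \{\roprd{i}{j}\} \cup (\sucT{\roprd{i}{j}} \cap S) \bigr) \;=\; \sManUp{i}.
\]
Finally, since $\sm{x} \neq \smup{i}$ implies $S_x \neq \sManUp{i}$ by Theorem~\ref{GFtheo257}(i), the inclusion is strict, and a last application of Theorem~\ref{GFtheo257}(iii) gives that $\sm{x}$ dominates $\smup{i}$.

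I do not expect a serious obstacle: the only subtle point is the closedness step, where one must be careful that ``successors of $\roprd{i}{j}$'' with respect to $\prec\prec$ (closed under transitivity, as $\sucT{\cdot}$ denotes) are precisely those rotations that cannot be eliminated unless $\roprd{i}{j}$ has been eliminated first. Once this is stated cleanly, the rest is bookkeeping in the rotation poset.
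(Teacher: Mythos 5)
Your proof is correct, and it rests on the same structural fact as the paper's: once everything is translated into closed subsets, the only rotations of $S$ that could separate $S_x$ from $\sManUp{i}$ are $\roprd{i}{j}$ and its (transitive) successors, and all of these are barred from $S_x$ --- the producing rotation because $\pair{i}{j}\notin\sm{x}$ while $S_x\subseteq S$ forces $\roelt{i}{j}\notin S_x$, and the successors by closedness of $S_x$. The difference is one of framing: the paper argues by contradiction, positing an $\sm{x}$ with $\smup{i}\preceq\sm{x}\preceq M$ and showing that any rotation of $S_x\setminus\sManUp{i}$ either reintroduces $\pair{i}{j}$ or breaks closedness, whereas you prove the containment $S_x\subseteq\sManUp{i}$ directly and then invoke Theorem~\ref{GFtheo257}(i) and (iii) to upgrade it to strict domination. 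Your direct version buys a small but real gain in completeness: the paper's contradiction hypothesis $\smup{i}\preceq\sm{x}$ is not the full negation of the conclusion (it silently omits the case where $\sm{x}$ and $\smup{i}$ are incomparable in the lattice), while your argument covers every $\sm{x}$ dominating $M$ without case analysis. The one step you flagged as subtle --- that $\rho'\in\sucT{\roprd{i}{j}}$ in $S_x$ would force $\roprd{i}{j}\in S_x$ --- is exactly the definition of a closed subset applied to the transitive precedence relation, so it is as clean as you hoped.
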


\begin{proof}
 $\smup{i} \preceq M$ by definition.
 Suppose by contradiction that there exists an $\sm{x}$ such that $\pair{i}{j} \not\in \sm{x}$ and
  $\smup{i} \preceq \sm{x} \preceq M$. 
 It implies that $\sManUp{i} \subset S_x \subset S$. 
 In this case, $(S_x\setminus\sManUp{i})\subset\Big\{  \{\ro{\prd{i}{j}}\} \cup \{ \sucT{\ro{\prd{i}{j}}} \cap S \}  \Big\}$. 
 However, this set contains $\ro{\prd{i}{j}}$ and the rotations preceded by $\ro{\prd{i}{j}}$. 
 Adding any rotation from this set to $S_x$ results in a contradiction by either adding
  $\pair{i}{j}$ to the matching, thereby not breaking that couple,
  or because the resulting set is not a closed subset.  
\end{proof} 

 \begin{lemma} 
 \label{lemma:dominated}
 If there exists an $\sm{x}$ that does not contain $\pair{i}{j}$ dominated by $M$ but different from $\smdown{i}$, then $\smdown{i}$ dominates $\sm{x}$.
 \end{lemma}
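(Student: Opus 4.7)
The plan is to mirror the argument of Lemma~\ref{lemma:dominating} by working entirely with closed subsets, using Theorem~\ref{GFtheo257} and the uniqueness of the rotation eliminating a given pair. The structure of the statement is dual to the previous lemma (moving \emph{down} in the lattice rather than \emph{up}), so I would exchange ``producing'' for ``eliminating'', $\preT{\cdot}$ for $\sucT{\cdot}$, and $\cup$ for $\setminus$ throughout.

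First I would translate the hypotheses into the language of closed subsets. By Theorem~\ref{GFtheo257}(iii), the fact that $M$ dominates $\sm{x}$ gives $S \subseteq S_x$, where $S_x$ denotes the closed subset corresponding to $\sm{x}$; and by construction $S \subseteq \sManDown{i}$. It therefore suffices to prove $\sManDown{i} \subseteq S_x$: combined with $\sm{x} \neq \smdown{i}$ and Theorem~\ref{GFtheo257}(iii), this yields the desired strict domination.

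Next I would argue that $\roelt{i}{j} \in S_x$. Since $\pair{i}{j} \in M$, $\pair{i}{j} \notin \sm{x}$, and $M$ dominates $\sm{x}$, in any sequence of rotation eliminations that transforms $M$ into $\sm{x}$ the pair $\pair{i}{j}$ must be eliminated at some step. By uniqueness of the eliminating rotation, this step is exactly $\roelt{i}{j}$, which therefore lies in $S_x \setminus S$. Because $S_x$ is a closed subset, every predecessor of $\roelt{i}{j}$ also belongs to $S_x$, i.e.\ $\preT{\roelt{i}{j}} \subseteq S_x$.

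Combining these facts gives $\sManDown{i} = S \cup \{\roelt{i}{j}\} \cup (\preT{\roelt{i}{j}} \setminus S) \subseteq S_x$. Since $\sm{x} \neq \smdown{i}$, the inclusion is strict, so $\smdown{i}$ strictly dominates $\sm{x}$ by Theorem~\ref{GFtheo257}(iii). The main obstacle is the clean justification of $\roelt{i}{j} \in S_x$: it blends the uniqueness of the eliminating rotation with the characterisation of closed subsets via $\mbest$-chains in Theorem~\ref{GFtheo257}(ii). The remainder is set-theoretic bookkeeping on closed subsets, formally dual to the reasoning of Lemma~\ref{lemma:dominating}.
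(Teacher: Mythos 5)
Your proof is correct, and it takes a cleaner and in fact more complete route than the paper's. The paper argues by contradiction: it supposes there is an $\sm{x}$ avoiding $\pair{i}{j}$ with $M \preceq \sm{x} \preceq \smdown{i}$, observes that $S_x \setminus S$ would then be a proper subset of $\{\roelt{i}{j}\} \cup (\preT{\roelt{i}{j}} \setminus S)$, and notes that $\roelt{i}{j}$ cannot be added without all of its predecessors, forcing $S_x = \sManDown{i}$. That argument only addresses matchings already assumed to be sandwiched between $M$ and $\smdown{i}$ in the domination order. You instead prove directly that for \emph{any} $\sm{x}$ dominated by $M$ and avoiding $\pair{i}{j}$, the closed subset $S_x$ must contain $\roelt{i}{j}$ (via the uniqueness of the eliminating rotation applied to an elimination sequence from $M$ to $\sm{x}$) and hence, by closedness, all of $\sManDown{i}$; Theorem~\ref{GFtheo257}(iii) then gives the domination claim with no comparability assumption, which is the generality in which the lemma is actually stated. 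The only point worth making explicit is that $\roelt{i}{j}$ exists because $\pair{i}{j} \notin \wbest$ --- an implicit hypothesis whenever $\smdown{i}$ is invoked --- but the paper leaves this implicit as well.
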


 \begin{proof}
 Similar to the proof above, suppose that there exists an $\sm{x}$ such that $\pair{i}{j} \not\in \sm{x}$
  and $M \preceq \sm{x} \preceq \smdown{i}$. 
 We have $S \subset S_x \subset \sManDown{i}$. 
 It implies $(S_x \setminus S) \subset \Big\{  \{\ro{\elt{i}{j}}\} \cup \{ \preT{\ro{\elt{i}{j}}} \setminus S \} \Big\}$.
 This set contains the rotation $\ro{\elt{i}{j}}$ that eliminates the pair and the rotations preceding $\ro{\elt{i}{j}}$. 
In order to add $\ro{\elt{i}{j}}$ all other rotations must be added to form a closed subset. 
 If all rotations are added, $S = \sManDown{i}$ which results in a contradiction.
 \end{proof}

\begin{lemma} 
\label{lemma:incomparable}
 For any stable matching $\sm{k}$ incomparable with $M$ such that $\sm{k}$ does not contain the pair $\pair{i}{j}$, $\smup{i}$ is closer to $M$ than $\sm{k}$.
\end{lemma}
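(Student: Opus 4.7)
The plan is to bring in the meet $M^\wedge := M \wedge \sm{k}$ in the lattice $\mathscr{M}$ and sandwich the distances with the help of Lemma~\ref{closenessLemma}. The argument naturally splits on whether $\roprd{i}{j}$ belongs to $S_k$; I focus on the case $\roprd{i}{j} \notin S_k$ (the complementary case, where $\roprd{i}{j} \in S_k$ and hence $\roelt{i}{j} \in S_k$ for $\sm{k}$ to avoid $\pair{i}{j}$, is handled by the symmetric statement built around $\smdown{i}$). Since the intersection of two closed subsets is again closed, $S_\wedge := S \cap S_k$ is a closed subset of $\Pi$ and by Theorem~\ref{GFtheo257} corresponds to a stable matching $M^\wedge$ with $M^\wedge \preceq M$ and $M^\wedge \preceq \sm{k}$.

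First, I would show $S_\wedge \subseteq \sManUp{i}$, i.e.\ $M^\wedge \preceq \smup{i}$. It suffices to verify that $S_\wedge$ contains no rotation of $\{\roprd{i}{j}\} \cup (\sucT{\roprd{i}{j}} \cap S)$. The rotation $\roprd{i}{j}$ itself is excluded by the case assumption, and for any $\rho' \in \sucT{\roprd{i}{j}} \cap S$, membership of $\rho'$ in $S_k$ would, by closedness of $S_k$, force $\roprd{i}{j} \in S_k$, a contradiction. The chain $M^\wedge \preceq \smup{i} \preceq M$ combined with Lemma~\ref{closenessLemma} then yields $d(M, \smup{i}) \leq d(M, M^\wedge)$.

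Second, I would establish $d(M, M^\wedge) < d(M, \sm{k})$ via the disjoint decomposition $d(M, \sm{k}) = d(M, M^\wedge) + d(M^\wedge, \sm{k})$. Because $\sm{k}$ is incomparable with $M$ we have $S_k \not\subseteq S$, hence $M^\wedge \neq \sm{k}$ and $d(M^\wedge, \sm{k}) \geq 1$. Chaining the two inequalities gives $d(M, \smup{i}) < d(M, \sm{k})$ as required.

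The main obstacle is justifying the additive decomposition $d(M, \sm{k}) = d(M, M^\wedge) + d(M^\wedge, \sm{k})$. The cleanest route invokes the standard fact that the rotations of $\Pi$ in which a fixed man appears form a totally ordered chain, so that the partner of a man in a matching with closed subset $T$ is determined by the $\prec\prec$-maximal rotation of $T$ containing him; from this it follows that a man who changes partner on the $M \to M^\wedge$ leg cannot revert on $M^\wedge \to \sm{k}$. If one prefers to avoid that structural ingredient, a purely set-theoretic alternative is to inject the men changed between $M$ and $M^\wedge$ into the men changed between $M$ and $\sm{k}$ (every such man's maximal rotation in $S$ lies in $S \setminus S_k$, hence is also a witness of change against $\sm{k}$), and then use incomparability to exhibit at least one extra witness arising from a rotation in $S_k \setminus S$.
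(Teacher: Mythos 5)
Your core mechanism --- pass to a common dominator, split the distance additively over the two legs, and extract a strict inequality from incomparability --- is essentially the paper's. The paper works with the closest common dominator $\sm{c}$ of $\sm{k}$ and $\smup{i}$ (so $S_c = S_k \cap \sManUp{i}$) and writes $d(\sm{k},M) = |X(S_k\setminus S_c)| + |X(S\setminus S_c)|$, whereas you use the meet $S_\wedge = S\cap S_k$ together with the containment $S_\wedge \subseteq \sManUp{i}$. In the case you actually treat ($\roprd{i}{j}\notin S_k$) your argument is complete and, if anything, tighter than the paper's: the containment $S\cap S_k\subseteq\sManUp{i}$ is exactly what yields the sandwich $M^\wedge\preceq\smup{i}\preceq M$, and your justification of the additive decomposition via the fact that the rotations moving a fixed man form a chain is the right (and genuinely needed) ingredient --- the paper instead appeals to Lemma~\ref{incomparableRotations}, which only covers pairwise incomparable rotations.

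The discrepancy is the case $\roprd{i}{j}\in S_k$, which you set aside and redirect to $\smdown{i}$. As written, that does not prove the lemma, which asserts unconditionally that $\smup{i}$ is closer. But this is precisely the configuration in which the paper's own argument breaks: since $\roprd{i}{j}\in S$ while $\roprd{i}{j}\notin\sManUp{i}$, we get $\roprd{i}{j}\in(S\setminus S_c)\cap(S_k\setminus S_c)$, so the paper's claimed disjointness $X(S_k\setminus S_c)\cap X(S\setminus S_c)=\emptyset$ fails there, just as your containment $S\cap S_k\subseteq\sManUp{i}$ does. Nor is the unconditional claim plausible in that case: if $\roprd{i}{j}$ is a large rotation, $\smup{i}$ is far from $M$, while $\sm{k}$ may differ from $M$ only by $\roelt{i}{j}$ and one small incomparable rotation; the matching that beats $\sm{k}$ there is $\smdown{i}$, which is all the subsequent theorem actually requires. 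So either carry out the $\smdown{i}$ half explicitly and weaken the conclusion to ``one of $\smup{i},\smdown{i}$ is closer to $M$ than $\sm{k}$,'' or state clearly that you are proving a corrected version of the lemma. As it stands, your write-up establishes the right theorem but not this statement verbatim.
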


\begin{proof} Let $S_k$ be the closed subset corresponding to $\sm{k}$, and $S$ be that corresponding to $M$. 

First, we consider the case in which $S_k \cap S = \emptyset$. 
If the closed subsets have no rotations in common the rotations in these sets are incomparable.
Using Lemma~\ref{incomparableRotations}, $X(S_k) \cap X(S) = \emptyset$. 
Therefore, $d(\sm{k}, M) = |X(S_k)| + |X(S)|$, whereas $d(\smup{i}, M) \leq |X(S)|$.
 
Second, we consider the case in which $S_k \cap S \neq \emptyset$. 
Let $\sm{c}$ be the closest dominating stable matching of both $\sm{k}$ and $\smup{i}$, along with $S_c$ as its corresponding closed subset.
Using Lemma~\ref{closenessLemma} we know that $d(\smup{i}, \sm{}) \leq d(\sm{c}, \sm{})$,
 where $d(\sm{c}, \sm{}) = |X(S \setminus S_c)|$.

Using Lemma~\ref{incomparableRotations} we know that $X(S_k \setminus S_c) \cap
  X(S \setminus S_c) = \emptyset $. 
Therefore, $d(\sm{k}, M) = |X(S_k \setminus S_c)| + |X(S \setminus S_c)|$. 
By substituting the formula above, $d(\sm{k}, M) \geq |X(S_k \setminus S_c)| + d(\smup{i}, M)$. 
Using the fact that $|X(S_k \setminus S_c)| > 0$ from the definition of $\sm{k}$, we can conclude that $d(\sm{k}, M) > d(\smup{i}, M)$.
\end{proof}

\smallskip\noindent
The following theorem is a direct consequence of Lemmas~\ref{lemma:dominating},~\ref{lemma:dominated},~\ref{lemma:incomparable}. 

\begin{theorem} The closest stable matching of a stable matching $M$ given the unwanted pair $\pair{i}{j}$ is either $\smup{i}$ or $\smdown{i}$.
\end{theorem}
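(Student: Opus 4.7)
The plan is to run a trichotomy on how an arbitrary competitor $M' \neq M$ (with $\pair{i}{j} \notin M'$) sits relative to $M$ in the lattice $\mathscr{M}$: because dominance is a lattice order, exactly one of three mutually exclusive cases occurs: (i) $M' \preceq M$, (ii) $M \preceq M'$, or (iii) $M$ and $M'$ are incomparable. Lemmas~\ref{lemma:dominating},~\ref{lemma:dominated}, and~\ref{lemma:incomparable} have been tailored precisely for these three cases; the only remaining work is to convert each dominance conclusion into a distance inequality via Lemma~\ref{closenessLemma}.

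For case (i) I may assume $M' \neq \smup{i}$ (otherwise $M'$ already realises one of the two candidates). Lemma~\ref{lemma:dominating} then gives $M' \preceq \smup{i}$, and since $\smup{i} \preceq M$ holds by construction I obtain the chain $M' \preceq \smup{i} \preceq M$. Invoking Lemma~\ref{closenessLemma} with $\sm{x} := M'$, $\sm{y} := \smup{i}$, $\sm{z} := M$ yields $d(\smup{i}, M) \leq d(M', M)$. Case (ii) is symmetric: assuming $M' \neq \smdown{i}$, Lemma~\ref{lemma:dominated} gives $M \preceq \smdown{i} \preceq M'$, and Lemma~\ref{closenessLemma} (with $\sm{x} := M$, $\sm{y} := \smdown{i}$, $\sm{z} := M'$) produces $d(\smdown{i}, M) \leq d(M', M)$. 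Case (iii) is handled directly by Lemma~\ref{lemma:incomparable}, which supplies the strict bound $d(\smup{i}, M) < d(M', M)$. Combining these three inequalities shows that $\min\{d(\smup{i}, M),\, d(\smdown{i}, M)\}$ lower-bounds $d(M', M)$ for every eligible competitor $M'$, so the minimum is attained by $\smup{i}$ or $\smdown{i}$.

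One subtlety worth noting is the boundary case where either $\smup{i}$ or $\smdown{i}$ is undefined, namely when $\pair{i}{j} \in \mbest$ (so $\roprd{i}{j}$ does not exist) or $\pair{i}{j} \in \wbest$ (so $\roelt{i}{j}$ does not exist). The hypothesis that $\pair{i}{j}$ is non-fixed ensures that at least one of the two candidates exists; in each degenerate case an argument analogous to the one used in Lemma~\ref{lemma:incomparable}, appealing to Lemma~\ref{incomparableRotations} to separate the sets of men touched by disjoint rotations, shows that the surviving candidate still beats every incomparable competitor.

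The principal obstacle here is conceptual rather than technical: verifying that the trichotomy really exhausts every lattice position of $M'$, and invoking Lemma~\ref{closenessLemma} with the correct orientation of the dominance chain in each branch. Once this bookkeeping is set out, the proof reduces to a short assembly of already-proved facts with no extra calculation required.
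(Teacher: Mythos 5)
Your proof is correct and follows exactly the route the paper intends: the paper states the theorem as a direct consequence of Lemmas~\ref{lemma:dominating}, \ref{lemma:dominated}, and~\ref{lemma:incomparable}, and your write-up simply makes the underlying trichotomy and the applications of Lemma~\ref{closenessLemma} explicit. The boundary remark about $\smup{i}$ or $\smdown{i}$ being undefined is a reasonable addition that the paper itself leaves implicit.
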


We show that checking if a stable matching is a $(1,b)$-supermatch can be performed in
 $O(n \times |\roset|)$ time after a $O(n^2 + |\roset|^2)$ preprocessing step. 
First, the pre-processing step consists of building: the poset graph (in $O(n^2)$ time); the lists
 $\preT{\ro{}}$, $\sucT{\ro{}}$ for each rotation $\ro{}$ (in $O(|\roset|^2)$ time); and $\roelt{i}{j}$
 and $\roprd{i}{j}$ for each pair $\pair{i}{j}$ whenever applicable (in $O(n^2)$ time).
Next, we compute $\sManUp{i}$ and $\sManDown{i}$ for each man. 
Note that $\sManUp{i}$ and $\sManDown{i}$ can be constructed in
 $O(|\roset|)$ time (by definition of $\sManUp{i}$ and $\sManDown{i}$) for each man $\man{i}$.
Note that $d(\smup{i}, M)$ is equal to the number of men 
participating in the rotations that are eliminated from $S$ to obtain $\sManUp{i}$. 
Similarly, $d(\smdown{i}, M)$ is equal to the number of men 
participating in the rotations that are added $S$ to obtain $\sManDown{i}$.
Last, if $b < d(\smup{i}, M) - 1 $ and $b < d(\smdown{i}, M) - 1$, we know that it is impossible
 to repair $M$ when $\man{i}$ needs to change his partner with at most $b$ other changes.
Otherwise, $M$ is a $(1,b)$-supermatch. 

The constraint programming (CP) formulation and the metaheuristic approaches that we propose in the following sections to find the most robust stable matching are essentially based on this procedure. 



\section{A Constraint Programming Formulation}
\label{sec:CP}

Constraint programming (CP) is a powerful technique for solving combinatorial search problems by expressing the relations between decision variables as constraints~\cite{Rossi:2006:HCP:1207782}.
We give in this section a CP formulation for finding the most robust stable matching, that is, the $(1,b)$-supermatch with the minimum b.
The idea is to formulate the stable matching problem using rotations, 
then extend that formulation in order to compute the two values $d(\smup{i}, M) $ and $d(\smdown{i}, M)$ (where M is the solution) for each man $\man{i}$ so that $b$ is always greater or equal to one of them. 
Using rotations to model stable matching problems has been used in~\cite{Gusfield:1989:SMP:68392} (page 194), and in~\cite{92-feder,07-fleiner,17-m2m}.




 Let $SP$ be the set of all stable pairs that are not fixed and $NM$ be the set of all men that are at least in one of the pairs in $SP$. 
Let the set of rotations be $\{ \ro{1},  \ro{2}, \ldots,  \ro{|\roset|} \}$ and 
for each $\pair{i}{j} \notin M_0$, $\roprd{i}{j}$ is the unique rotation that produces
 $\pair{i}{j}$.
Moreover, for each $\pair{i}{j} \notin M_z$, $\roelt{i}{j}$ denotes the unique rotation that eliminates
 the pair $\pair{i}{j}$. 
Notice that $\prd{i}{j}, \elt{i}{j} \in \{1, \ldots, |\roset| \}$.
We use $R_i$ to denote the set of rotations in which man $\man{i}$ is involved.


In our model, we assume that a $O(n^2)$ pre-processing step is performed to compute $\mbest$, $\wbest$, 
$SP$, $NM$, the poset graph, $\roelt{i}{j}$, and $\roprd{i}{j}$ for every pair $\pair{i}{j}$ whenever applicable.
We also need to compute the lists  $\preT{\ro{}}$, $\sucT{\ro{}}$ for each rotation $\ro{}$ (in $O(|\roset|^2)$ time). 

We start by giving the different sets of variables, then we show the constraints. 
Let $\woman{i}^0$ denote man $\man{i}$'s partner in $\mbest$, and $\woman{i}^Z$ denote his partner in $\wbest$.
 There exists a boolean variable $\x{i}{j}$ for every pair $\pair{i}{j}$ to indicate if $\pair{i}{j}$ is part of the solution $M$. 
Therefore, $\x{i}{j}$ is a boolean variable related to a pair in $SP$.
There exists also a boolean variable $\s{v}$ for each rotation $\ro{v}$, where
 $v \in \{ 1, \ldots, \vert \roset \vert \}$ to indicate if $\ro{v}$ is in the closed subset of the solution. 
Moreover, we use an integer variable $b$ with $[1,\vert NM \vert - 1]$ as an initial domain to represent the objective. That is, the solution is a $(1,b)$-supermatch with the minimum $b$. 

 For each man $\man{i} \in NM$, we define the following variables:  

\begin{itemize}
   \item  $\alpha_i$ (respectively $\beta_i$): a boolean variable to indicate if $\pair{i}{\woman{i}^0}$ (respectively  $\pair{i}{\woman{i}^Z}$) is a part of the solution. 

\item  $\sUp{v}{i}$ (respectively $\sDown{v}{i}$): a boolean variable for every $\ro{v} \in \roset$
 to know if $\ro{v}$ is in the difference set $S \setminus \sManUp{i}$ (respectively
 $\sManDown{i} \setminus S$) where S is the closed subset of the solution, $\sManUp{i}$
 (respectively $\sManDown{i}$) corresponds to the closed subset of the nearest stable matchings
 dominating (respectively dominated by) S when then pair defined by man $\man{i}$ and his partner is to be eliminated. 


\item  $\y{l}{i}$ (respectively $\z{l}{i}$): a boolean variable for each man $\man{l} \in NM$ to know if
 the partner of $\man{l}$ needs to be changed to find a repair when man $\man{i}$ breaks up with his
 partner to obtain $\sManUp{i}$ (respectively $\sManDown{i}$) from the solution. 


\end{itemize}
  
A pair is a part of a solution if and only if it is produced by a rotation (if it is not a part of $\mbest$) and not eliminated by another (if it is not a part of $\wbest$). Moreover, a set of rotations corresponds to a closed subset (or a solution) if and only if all the parents of each rotation in the set are also in the set. Hence, the following constraints are required to keep track of the pairs that are in the solutions and also to make sure the solution is a stable matching.

More specifically, let $S$ denote the closed subset of the solution. 
Without loss of generality, the first constraints ensure that if a pair $\pair{i}{j}$ is in the solution, the rotation that produces $\pair{i}{j}$ must be in $S$ and the rotation that eliminates it should not be in $S$.
The second constraints enforce the set of rotations assigned to true to form a closed subset.
This model was used in the many-to-many setting in~\cite{17-m2m}.

\begin{enumerate}
\item \label{cons:stability}  $\forall \pair{i}{j} \in SP$:
     \begin{enumerate}
     \item \label{cons:stability1} If $\pair{i}{j} \in \mbest$, $\x{i}{j} \leftrightarrow \neg \s{\elt{i}{j}}$
     \item \label{cons:stability2}  Else if $\pair{i}{j} \in \wbest$, $\x{i}{j} \leftrightarrow \s{\prd{i}{j}}$
     \item \label{cons:stability3}  Otherwise $\x{i}{j} \leftrightarrow (\s{\prd{i}{j}} \wedge \neg \s{\elt{i}{j}})$    
     \end{enumerate}

\item \label{cons:closed}$\forall \ro{v}, \forall \ro{v'} \in \preNT{\ro{v}}$: $\s{v} \rightarrow \s{v'}$. 



\item The following set of constraints are required to build the difference set $S \setminus \sManUp{i}$:
\begin{enumerate}

\item $\forall \man{i} \in NM, \alpha_i \leftrightarrow \x{i}{\woman{i}^0}$: 
This constraint handles the special case when man $\man{i}$ is matched to his partner in $\mbest$. 

\item $\forall \man{i} \in NM$:
    \begin{enumerate}
    \item \label{cons:produce} $\forall \pair{i}{j} \in SP$, where $j \not= \woman{i}^0$:     
         $ \x{i}{j} \leftrightarrow \sUp{\prd{i}{j}}{i} $. 

    This constraint is used to know the rotation that produces the partner of $\man{i}$ in the solution.
         
    \item $\forall \ro{v} \in \roset:$ 
    \begin{itemize}
	  \item $ \alpha_i \rightarrow \neg \sUp{v}{i}$ $\forall \ro{v} \in \roset$: to make sure if this pair is not produced by a rotation, then there does not exist an $\sManUp{i}$ set.

 	 \item  $\sUp{v}{i} \wedge \s{v'} \rightarrow \sUp{v'}{i}$ ,$\forall  \ro{v'} \in \sucNT{\ro{v}}$: to build the difference set.

 	\item There are two cases to distinguish:

    \begin{itemize}
    
\item  If $\ro{v}$ produces the pair $\pair{i}{j}$:
        $\sUp{v}{i} \rightarrow \s{v} \wedge \big( \x{i}{j}  \vee \bigvee_{ \ro{v*} \in \preNT{\ro{v}}} (\sUp{v*}{i} \big)$.
	
    \item Else :
  \label{cons:notproduces}  $\sUp{v}{i} \rightarrow \s{v} \wedge \big( \bigvee _{\ro{v*} \in \preNT{\ro{v}}} (\sUp{v*}{i} \big)$.

	\end{itemize}
    \end{itemize}
        






\end{enumerate}
\end{enumerate}

\item Following the same intuition above, we use the following constraints to represent the difference set $\sManDown{i} \setminus S$.

\begin{enumerate}
\item $\forall \man{i} \in NM, \beta_i \leftrightarrow \x{i}{\woman{i}^Z}$.

\item $\forall \man{i} \in NM$:
    \begin{enumerate}

     \item $\forall \pair{i}{j} \in SP$, where $j \not= \woman{i}^Z$: $ \x{i}{j} \leftrightarrow \sDown{\elt{i}{j}}{i} $ 
     
    \item $ \forall \ro{v} \in \roset$:
    
    \begin{itemize}                 
  	 	\item $ \beta_i \rightarrow \neg \sDown{v}{i}$: to handle the particular case of having a partner in $M_z$. 
   		\item $ \sDown{v}{i} \wedge \neg \s{v'} \rightarrow \sDown{v'}{i}$ $\forall \ro{v'} \in \preNT{\ro{v}}$. 
        
        \item There are two cases to distinguish:
    
    \begin{itemize}
        
        \item If $\ro{v}$ eliminates the pair $\pair{i}{j}$:
        $\sDown{v}{i} \!\!\!\rightarrow\! \neg \s{v} \wedge\! \big( \x{i}{j} \!\vee \bigvee_{\ro{v*} \in \sucNT{\ro{v}}} \sDown{v*}{i} \big)$.
        
   		\item  Else:
        $\sDown{v}{i} \!\!\!\rightarrow\! \neg \s{v} \wedge\! \big( \bigvee_{\ro{v*} \in \sucNT{\ro{v}}} \sDown{v*}{i} \big)$.
\end{itemize}
    
        
         
     
\end{itemize}
    \end{enumerate}
    
\end{enumerate}


\item Last, we define the following constraints to count exactly how many men must be modified to repair the solution by using $\sManUp{i}$ and $\sManDown{i}$: 

\begin{enumerate}
\item The following two constraints are required to keep track if man $\man{l}$ has to change his partner to obtain $\sManUp{i}$ (or $\sManDown{i}$) upon the break-up of man $\man{i}$ with his current partner. $\forall \man{l} \neq \man{i} \in NM$: 
   \begin{enumerate}
   \item
      $\y{l}{i} \leftrightarrow \bigvee_{\ro{v} \in R_l} \sUp{v}{i}$

     \item   $\z{l}{i} \leftrightarrow \bigvee_{\ro{v} \in R_l} \sDown{v}{i}$  
\end{enumerate}

\item The next two constraints are required to count the number of men that needs to change their partners to obtain the closest stable matchings. $\forall \man{i} \in NM$: 
\begin{enumerate}
   \item      $\alpha_i \rightarrow \dup{i} = n$ and
      $\neg \alpha_i \rightarrow \dup{i} = \sum \y{l}{i}$

   \item      $\beta_i \rightarrow \ddown{i} = n$ and 
      $\neg \beta_i \rightarrow \ddown{i} = \sum \z{l}{i} $\\
\end{enumerate}
   
\item To constrain b:

$\forall \man{i} \in NM$,    
 $\big(\neg \alpha_i \rightarrow (b \geq \dup{i}) \big) \vee  \big(\neg \beta_i \rightarrow (b \geq \ddown{i}) \big)$

\item The objective is to minimise $b$. We can easily improve the initial upper bound by taking the minimum value between the maximum number of men required to repair any (non-fixed) pair from $\mbest$ and from $\wbest$.

\end{enumerate}

\end{enumerate}

\section{Metaheuristics}
\label{sec:Metaheuristics}
For finding the $(1,b)$-supermatch with minimal $b$, we provide solutions using two different mono-objective metaheuristics: genetic algorithm (GA) and iterated local search (LS)~\cite{Boussaid201382}. 
The intuition behind using those approaches is that they are mostly based on finding neighbour solutions and are easy to adapt to our problem. 
In this section, we will describe how to apply those two methods to find $(1,b)$-supermatches and subsequently the most robust stable matching.

The two metaheuristic models described in the following sections are also based on the assumption that a $O(n^2)$ pre-processing step is performed to compute $\mbest$, $\wbest$, the poset graph, the lists  $\preT{\ro{}}$, $\sucT{\ro{}}$ for each rotation $\ro{}$ (in $O(|\roset|^2)$ time).

\subsection{Genetic Algorithm}
\label{meta:geneticAlg}

Genetic algorithms are being used extensively in optimization problems as an alternative to traditional heuristics. They are compelling robust search and optimization tools, which work on the natural concept of evolution, based on natural genetics and natural selection.
Holland introduced the GAs and he has also shown how to apply the process to various computationally difficult problems~\cite{holland1975,holland92adaptation,Back:1996:EAT:229867}.

Our approach is based on the traditional GA, 
where evolutions are realised after the initialisation of a random set of solutions (population). 
The process finishes either when an acceptable solution is found 
or the search loop reaches a predetermined limit, 
i.e. maximum iteration count or time limit.



The genetic algorithm that we propose to find the most robust stable matching is detailed below with a follow-up example for one iteration.
 
\subsubsection{Initialization}
The purpose of initialization step is to generate a number of random stable matchings for constructing the initial population set, denoted by $\boldsymbol{P}$.
Recall that each closed subset in the rotation poset corresponds to a stable matching. 
The random stable matching generation is performed by selecting a random rotation $\ro{}$ from the rotation poset and adding all of its predecessors to the rotation set to construct a closed subset $S$, therefore a stable matching $M$ as detailed in Algorithm~\ref{alg:randSMcreation}.
In Line~\ref{lab:createsm}, the method $\Call{CreateSM}{S}$ is responsible from converting the given closed subset $S$ to its corresponding matching $M$ by exposing all the rotations in $S$ in order on $\mbest$. 

\begin{algorithm}
\caption{Random Stable Matching Creation}\label{alg:randSMcreation}
\begin{algorithmic}[1]
\Procedure{CreateRandomSM}{\null}

\State $\ro{} \leftarrow$ \Call{selectRandom}{$\roset$}
\State $S \leftarrow \{ \ro{} \}$

\For {$\ro{p} \in \Call{allPredecessors}{\ro{}}$}
 \State $S \leftarrow S \cup \{ \ro{p} \}$
\EndFor

\State $M \leftarrow$ \Call{CreateSM}{S}
\label{lab:createsm}

\Return $M$

\EndProcedure
\end{algorithmic}
\end{algorithm}

Then, $n$ randomly created stable matchings are added to the population set $\boldsymbol{P}$ as described in Algorithm~\ref{alg:initialization}.

\begin{algorithm}
\caption{Initialization of the Population}\label{alg:initialization}
\begin{algorithmic}[1]
\Procedure{Initialize}{n}

\State $\boldsymbol{P} \leftarrow \emptyset$
\State $i \leftarrow 0$

\While {$i < n$}
 \State $\sm{i} \leftarrow$ \Call{CreateRandomSM}{ }
 \State $\boldsymbol{P} \leftarrow \boldsymbol{P} \cup \{ \sm{i} \}$
 \State $i \Leftarrow i + 1$
\EndWhile

\Return $\boldsymbol{P}$

\EndProcedure
\end{algorithmic}
\end{algorithm}

Let us illustrate the procedure on a sample for the Stable Marriage instance given in Table~\ref{table:sm}. 
Assume that the population size $n$ is 3. In order to generate the 3 initial stable matchings, 3 random
rotations are selected from the rotation poset, namely: $\ro{0}, \ro{1}, \ro{5}$. 
Since $\ro{0}$ has no predecessors, the stable matching that $\ro{0}$ corresponds is obtained by exposing $\ro{0}$ on $\mbest$, resulting in $\sm{1}$. Similarly, the closed subset obtained by adding all predecessors of $\ro{1}$ corresponds to $\sm{2}$ and the closed subset for $\ro{5}$ corresponds to $\sm{4}$.
Thus, the current population is: $\boldsymbol{P} = \langle \sm{1}, \sm{2}, \sm{4} \rangle$.

\subsubsection{Evaluation}
For each stable matching $\sm{i}$, we denote by $b_i$ its robustness value for (1,b)-supermatch.
At the evaluation step, we compute the value $b_i$ of each stable matching $\sm{i} \in \boldsymbol{P}$.
Then, a fitness value $v_i$ is assigned to each $\sm{i}$ in the population.
This is to indicate that a stable matching with a lower b value is more fit.
Then, the values $v_i$ are normalised in the interval $[0,1]$.

Let $max_b$ denote the maximum b value in the population. 
The first thing required is to get the complements of the numbers with $max_b$ to indicate that a stable matching with a smaller b is a more fit solution. 
In the meantime, a small constant $c_o$ is added to each $v_i$ to make sure even the least fit stable matching $M_i$, where $b_i = max_b$, is still an eligible solution (Equation~\ref{eq:complement}). 
Then, the normalization is applied as shown in Equation~\ref{eq:normalization}.
For normalization, an integer variable $sum_v$ is used to keep track of the sum of the total fitness values in the population.

\begin{equation}
v_i = max_b + c_0 - b_i
\label{eq:complement}
\end{equation}

\begin{equation}
v_i = \dfrac{v_i}{\sum_{\sm{i} \in \boldsymbol{P}} v_i}
\label{eq:normalization}
\end{equation}

The steps are detailed in Algorithm~\ref{alg:evaluation}.

\begin{algorithm}
\caption{Evaluation of Fitness Values of each Stable Matching}\label{alg:evaluation}
\begin{algorithmic}[1]
\Procedure{Evaluation}{$ $}

\State $max_b \leftarrow 0$ 
\For{ $\sm{i} \in \boldsymbol{P} $ }
 \If {$b_{i} > max_b$}
  \State $max_b \leftarrow b_{i}$
 \EndIf
\EndFor

\State $sum_v \leftarrow 0$
\label{lab:compBeginning}
\For{ $\sm{i} \in \boldsymbol{P} $ }
 \State $v_{i} \leftarrow (max_b + c_0 - b_{i})$
 \State $sum_v \leftarrow (sum_v + v_{i})$
\EndFor
\label{lab:compEnd}

\For{ $\sm{i} \in \boldsymbol{P} $ }
 \State $v_{i} \leftarrow (v_{i} / sum_v)$
\EndFor

\EndProcedure
\end{algorithmic}
\end{algorithm}

The b values for the stable matchings given in our example, where $\boldsymbol{P} = \langle \sm{1}, \sm{2}, \sm{4} \rangle$ are calculated as $b_1 = 4$, $b_2 = 3$, $b_4 = 3$ by the polynomial procedure defined in Section~\ref{sec:verification}.
Given these values, $b_{max} = 4$. 
Let us fix $c_0$ to $0.5$.
Then, the fitness values are calculated as $v_1 = 4 + 0.5 - 4 = 0.5$, $v_2 = 4 + 0.5 - 3 = 1.5$, $v_4 = 4 + 0.5 - 3 = 1.5$ and the sum of the fitness values is $sum_v = v_1 + v_2 + v_4 = 3.5$.
By normalizing the fitness values, we obtain the following values: 
$v_1 = 0.143, v_2 = 0.429, v_4 = 0.429$ indicating that $\sm{2}$ and $\sm{4}$ are the fittest stable matchings in $\boldsymbol{P}$.

\subsubsection{Evolution}
The evolution step consists in selecting stable matchings from the population $\boldsymbol{P}$ using a selection method, then applying crossover and mutation on the selected matchings.

We use the roulette wheel selection in our procedure. Because it is a method favouring more fit solutions to be selected and therefore useful for the convergence of the population to a fit state~\cite{Goldberg:1989:GAS:534133}.
Algorithm~\ref{alg:rwSelection} gives details of the roulette wheel selection procedure we used. 
First, a random double number $r \in [0,1]$ is generated. 
Then, the fitness values of the stable matchings in the population are visited.
At each step, a sum of the fitness values seen so far are recorded in a variable named $sum_c$ .
This procedure continues until a stable matching $\sm{i}$ is found, where the recording $sum_c$ up to $\sm{i}$ satisfies the criterion at line~\ref{lab:criterionRWS} in Algorithm~\ref{alg:rwSelection}.

\begin{algorithm}
\caption{Roulette Wheel Selection}\label{alg:rwSelection}
\begin{algorithmic}[1]
\Procedure{Selection}{$ $}

\State $r \leftarrow $ \Call{RandomDouble}{$0, 1$}
\State $sum_c \leftarrow 0$ 

\For{$\sm{i} \in \boldsymbol{P}$}

 \If{$(sum_c \leq r) \wedge (r < sum_c + v_{i})$}
 \label{lab:criterionRWS}
   \Return $\sm{i}$ 
 \EndIf
 \State $sum_c \leftarrow sum_c + v_{i}$
\EndFor

\EndProcedure
\end{algorithmic}
\end{algorithm}

The evolution step is then continued by applying crossover and mutation on selected stable matchings.
First, two stable matchings are selected, namely $\sm{1}$ and $\sm{2}$. 
If these matchings are different from each other, crossover is applied (between Lines~\ref{lab:crossStart} and~\ref{lab:crossEnd} in Algorithm~\ref{alg:evolution}).

\begin{algorithm}
\caption{Evolution Phase}\label{alg:evolution}
\begin{algorithmic}[1]
\Procedure{Evolution}{$ $}

\State $\sm{1} \leftarrow$ \Call{Selection}{$ $}
\label{lab:crossStart}
\State $\sm{2} \leftarrow$ \Call{Selection}{$ $}

\If {$ \sm{1} \neq \sm{2}$}
 \State $(\sm{c1}, \sm{c2}) \leftarrow$ \Call{Crossover}{$\sm{1}, \sm{2}$}
 \label{lab:crossEnd}
 \State \Call{Refine}{$\sm{c1}, \sm{c2}$}
 \State \Call{Evaluation}{$ $} 
 \label{lab:evlAfterRef}
\EndIf

\State $\sm{fit} \leftarrow$ \Call{GetFittest}{$\boldsymbol{P}$}
\label{lab:mutStart}
\State $\sm{m} \leftarrow$ \Call{Selection}{$ $}

\If{$\sm{m} \neq \sm{fit}$}
 \State \Call{Mutation}{$\sm{m}$}
 \label{lab:mutEnd}
\EndIf

\EndProcedure
\end{algorithmic}
\end{algorithm}

After applying the crossover on $\sm{1}, \sm{2}$ and obtaining the resulting stable matchings $\sm{c1}, \sm{c2}$, we refine the population.
Refinement is applied for adding adding more fit solutions to the population and removing the least fit solutions.
The overall procedure is detailed in Algorithm~\ref{alg:refine}.
In the refinement procedure, $\sm{c1}, \sm{c2}$ are immediately added to the population $\boldsymbol{P}$.
Then, the worst two solutions, in other words the two least fit stable matchings $\sm{l1}$ and $\sm{l2}$ are found and removed from the population.
Note that after the refinement procedure in line~\ref{lab:evlAfterRef} in Algorithm~\ref{alg:evolution}, the evaluation must be repeated to properly evaluate the current fitness values in the population.

\begin{algorithm}
\caption{Refinement Procedure}\label{alg:refine}
\begin{algorithmic}[1]
\Procedure{Refine}{$\sm{c1}, \sm{c2}$}

\State $\boldsymbol{P} \leftarrow \boldsymbol{P} \cup \{ \sm{c1} \}$
\State $\boldsymbol{P} \leftarrow \boldsymbol{P} \cup \{ \sm{c2} \}$

\State $(\sm{l1}, \sm{l2}) \leftarrow$ \Call{LeastFitTwoSMs}{$\boldsymbol{P}$}

\State $\boldsymbol{P} \leftarrow \boldsymbol{P} \setminus \{ \sm{l1} \}$
\State $\boldsymbol{P} \leftarrow \boldsymbol{P} \setminus \{ \sm{l2} \}$

\EndProcedure
\end{algorithmic}
\end{algorithm}

Subsequently, between the lines~\ref{lab:mutStart} and~\ref{lab:mutEnd} of Algorithm~\ref{alg:evolution}, a stable matching $\sm{m}$ is selected for mutation.
If $\sm{m}$ is different from the fittest stable matching $\sm{fit}$, the mutation is applied.
Mutation step may result in producing either better or worse solutions. 
We do not apply a refinement procedure after the mutations, instead directly apply the mutation on $\sm{m}$.
Next, we go into the details of the crossover and mutation operations.

\paragraph{Crossover.}
The procedure for crossover is detailed in Algorithm~\ref{alg:crossover}. 
Given two stable matchings $\sm{1}, \sm{2}$ and their corresponding closed subsets $S_1$ and $S_2$, 
one random rotation is selected from each subset. 
Let $\ro{1}$ and $\ro{2}$ denote the randomly selected rotations. 
Between the lines~\ref{lab:COifBeg} and~\ref{lab:COifEnd}, the procedure for creating a new closed subset $S_{c2}$ from the closed subset $S_2$ is shown.
The method is, if $S_2$ already contains the rotation $\ro{1}$, then $S_{c2}$ is constructed by removing $\ro{1}$ and all its successor rotations that are in $S_2$ from the set $S_2$. 
We denote this removal process by the method $\Call{Remove}$ in lines~\ref{lab:rem1} and~\ref{lab:rem2} in Algorithm~\ref{alg:crossover}.
However, if $\ro{1} \not\in S_2$, then $\ro{1}$ and all its predecessors that are not included in $S_2$ are added to $S_{c2}$ with the rotations that are already in $S_2$ to form the new closed subset $S_{c2}$. 
Similarly, we denote the addition process by the method $\Call{Add}$ in lines~\ref{lab:add1} and~\ref{lab:add2}.
This new closed subset $S_{c2}$ corresponds to one of the stable matchings produced by crossover, $\sm{c2}$.
Similarly, the same process is repeated for $S_1$ by constructing $S_{c1}$, and another stable matching $\sm{c2}$ is obtained.

\begin{algorithm}
\caption{Crossover Procedure}\label{alg:crossover}
\begin{algorithmic}[1]
\Procedure{Crossover}{$\sm{1}, \sm{2}$}

\State $\ro{1} \leftarrow $ \Call{Random}{$S_1$}
\State $\ro{2} \leftarrow $ \Call{Random}{$S_2$}

\If{$\ro{1} \in S_2$}
\label{lab:COifBeg}
 \State $S_{c2} \leftarrow$ \Call{Remove}{$S_2, \ro{1}$}
 \label{lab:rem1}
\Else
 \State $S_{c2} \leftarrow$ \Call{Add}{$S_2, \ro{1}$}
  \label{lab:add1}
\label{lab:COifEnd}
\EndIf

\If{$\ro{2} \in S_1$}
 \State $S_{c1} \leftarrow$ \Call{Remove}{$S_1, \ro{2}$}
  \label{lab:rem2}
\Else
 \State $S_{c1} \leftarrow$ \Call{Add}{$S_1, \ro{2}$}
 \label{lab:add2}
\EndIf

\State $\sm{c1} \leftarrow$ \Call{CreateSM}{$S_{c1}$}
\State $\sm{c2} \leftarrow$ \Call{CreateSM}{$S_{c2}$}

\Return $(\sm{c1}, \sm{c2})$

\EndProcedure
\end{algorithmic}
\end{algorithm}

For the sake of the example, assume that $\sm{1}$ and $\sm{4}$ are selected for crossover by the roulette wheel selection from the current population $\boldsymbol{P} = \langle \sm{1}, \sm{2}, \sm{4} \rangle$. One random rotation is selected from each set: $\ro{0} \in S_1$ and $\ro{4} \in S_4$. 
Then, the produced stable matchings are $\sm{3}$ since $\ro{4} \not \in S_1$ and $\{ \ro{0} \} \cup \{ \ro{4}, \ro{1} \} = S_3$, and $\sm{0}$ by removing all rotations $\ro{0}, \ro{1}, \ro{4}, \ro{5}$ from $S_4$. 
After adding new stable matchings, the population becomes $\boldsymbol{P} = \langle \sm{1}, \sm{2}, \sm{4}, \sm{0}, \sm{3} \rangle$.
Then the population is refined by removing the least two fit stable matchings, which are in this case $\sm{0} since b_0 = 5$ and $\sm{1} since b_1 = 4$.
Hence, the refined population after crossover is: $\boldsymbol{P} = \langle \sm{2}, \sm{4}, \sm{3} \rangle$.

\paragraph{Mutation.}
At this step, the mutation method is given a stable matching $\sm{}$. Let $S$ denote its corresponding closed subset.
The procedure starts by selecting a random rotation $\ro{}$ from the rotation poset $\roset$.
Then, similar to the crossover procedure above, if $\ro{} \not\in S$, $\ro{}$ and all its required predecessors to form a closed subset are added to $S$.
However, if $\ro{} \in S$, then $\ro{}$ and all its successors in $S$ are removed from $S$. 
Algorithm~\ref{alg:mutation} defines this procedure.

\begin{algorithm}
\caption{Mutation Procedure}\label{alg:mutation}
\begin{algorithmic}[1]
\Procedure{Mutation}{$\sm{}$}

\State $\ro{} \leftarrow $ \Call{Random}{$\roset$}

\If{$\ro{} \in S$}
 \State $S \leftarrow$ \Call{Remove}{$S, \ro{}$}
\Else
 \State $S \leftarrow$ \Call{Add}{$S, \ro{}$}
\EndIf

\EndProcedure
\end{algorithmic}
\end{algorithm}

In order to demonstrate this step on the example population, assume that $\sm{3}$ has been selected for mutation and $\ro{2}$ is randomly selected from the rotation poset. Since $\ro{2} \not\in S_3$, $\ro{2}$ and all its predecessors are added to $S_3$, resulting in $\{\ro{0}, \ro{1}, \ro{4}\} \cup \{\ro{2}\} = S_6$. The new stable matching is $\sm{6}$ is added to the population and the mutated stable matching, $\sm{3}$ is deleted.
The final population is: $\langle \sm{6}, \sm{2}, \sm{4} \rangle$.

The procedure for the overall algorithm is given in Algorithm~\ref{alg:geneticalgorithm}.
The evolution step is repeated until either a solution is found or the termination criteria are met. In our case, we have two termination criteria: a time limit $lim_{time}$ and an iteration limit to cut-off the process if there is no improved solution for the specified number of iterations $lim_{cutoff}$. 
For cut-off termination criterion, we keep a counter $cnt$ to count the number of iterations with no  improvement, i.e. more fit solution than the current fittest solution.
This counter is increased at each unimproved iteration until it reaches the cut-off limit.
If a more fit solution is generated, $cnt$ is restarted as $cnt = 0$.

In order to compute the b values of the stable matchings, we use the method described in the Section~\ref{sec:verification} for each stable matching. 
Recall that this procedure takes 
$O(n \times |\roset|)$ time after a $O(n^2 + |\roset|^2)$ preprocessing step. 
In order to speed up this process from a practical point of view, in our experiments an extra data structure is used to memorize the fitness value of already generated stable matchings.
In both crossover and mutation, the worst case time complexity for one call
 is bounded by $O(| \roset |)$ since the set of all predecessors (respectively successors) of a given node is $\preT{\ro{}}$ (respectively $\sucT{\ro{}}$).

\begin{algorithm}
\caption{Genereal Procedure of Genetic Algorithm}\label{alg:geneticalgorithm}
\begin{algorithmic}[1]
\Procedure{GeneticAlgorithm}{$ $}

\State $\boldsymbol{P} \leftarrow$ \Call{Initialize}{$n$}
\State \Call{Evaluation}{$ $}
\State $\sm{fit} \leftarrow null$ 
\State $b_{fit} \leftarrow 0$ 
\State $cnt \leftarrow 0$ 
\While {$time < lim_{time}$ or $b_{fit} != 1$}

\State \Call{Evolution}{$ $}
\State \Call{Evaluation}{$ $}

 \If {$\sm{fit}$ is updated}
  \State $cnt \leftarrow 0$
 \Else
  \State $cnt \leftarrow cnt + 1$
 \EndIf
 
\If {$cnt = lim_{cutoff}$}
 \State return $\sm{fit}$
\EndIf

\EndWhile
\State return $\sm{fit}$

\EndProcedure
\end{algorithmic}
\end{algorithm}

\subsection{Local Search}

In our local search model, we use the well-known iterated local search with restarts ~\cite{stutzle1998local,localsearch}.
The first step of the algorithm is to find a random solution to start with. 
A random stable matching $M$ is created by using the Initialization method detailed in Algorithm~\ref{alg:initialization} in Section~\ref{meta:geneticAlg}.
Then, a neighbour stable match set is created using the properties of rotation poset. 

A \textit{neighbour} in this context is defined as a stable matching that differs only by one rotation from the closed subset $S$ of the matching $M$.
Let $L_s$ denote the set of leaf rotations of $S$.
Removing one rotation $\ro{i} \in L_s$ from $S$ corresponds to a different dominating neighbour of $M$. 
Similarly, let $N_s$ denote the set of rotations that are not included in $S$ and either they have an in-degree of 0 or all of their predecessors are in $S$.
In the same manner, adding a rotation from $N_s$ to $S$ at a time corresponds to a different dominated neighbour of $M$.

Figure~\ref{figure:localSearch} illustrates a sample rotation poset, where the closed subset is $S = \{\ro{0}, \ro{1}, \ro{3}, \ro{4}, \ro{5}, \ro{7}\}$. 
Then, the set of leaf rotations in this set is identified as $L_s = \{\ro{1}, \ro{4}, \ro{7}\}$. 
Similarly, the set of neighbour rotations is $N_s = \{\ro{2}, \ro{6}\}$ since all their predecessors are included in the closed subset $S$.

\begin{figure}[ht]
    \centering
 	\includegraphics[width=.35\textwidth]{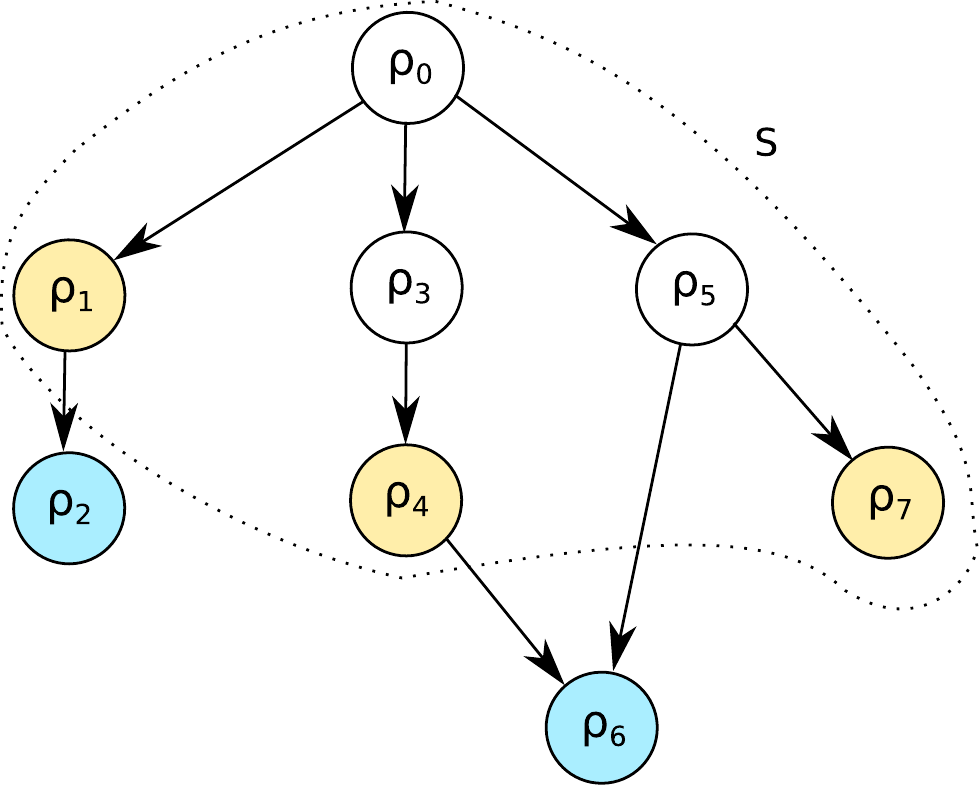}
    \caption{Illustration of the sets $L_s = \{ \ro{1}, \ro{4}, \ro{7} \}$ and $N_s = \{ \ro{2}, \ro{6} \}$ on a sample rotation poset for a given closed subset $S$.}
    \label{figure:localSearch}
\end{figure}

These two sets correspond to rotations, where removal of any rotation $\ro{} \in L_s$ from $S$ at a time does not require removal of any other rotations in order to obtain another closed subset, i.e. a neighbour. 
Likewise, adding any rotation $\ro{} \in N_s$ to $S$ does not require adding any additional rotations to obtain a closed subset.
Algorithm~\ref{alg:neighbours} expands on the procedure for identifying the set of neighbour stable matchings $N$ of a given stable matching $\sm{}$.
The lines between~\ref{lab:LSinStart} and~\ref{lab:LSinEnd} details the procedure of finding the rotations $\ro{}$ in $N_s$
Similarly, the lines between~\ref{lab:LSoutStart} and~\ref{lab:LSoutEnd} defines the procedure for identifying the leaf nodes $\ro{}$ in $L_s$.
The neighbour stable matchings are denoted by $\sm{n}$.
In this procedure, a variable called $cnt$ is used to count the number of incoming edges of $\ro{}$, where the source node is not in $S$ and then to count the number of outgoing edges whose target node is a member of $S$.

\begin{algorithm}
\caption{Identification of Neighbour Set}\label{alg:neighbours}
\begin{algorithmic}[1]
\Procedure{FindNeighbours}{$\sm{}$}

\State $N \leftarrow \emptyset$

\For{$\ro{} \in \roset$}
\label{lab:LSinStart}
 \If{$\ro{} \notin S$}
   \State $cnt \leftarrow 0$
   
   \For{ $e \in$ \Call{IncomingEdges}{$\ro{}$}}
      \If{$e.source \notin S$}
        \State $cnt \leftarrow cnt + 1$ 
      \EndIf
   \EndFor
   
   \If{$cnt = 0$} 
     \State $\sm{n} \leftarrow$ \Call{CreateSM}{$S \cup \{ \ro{} \}$}
     \State $N \leftarrow N \cup \{ \sm{n} \}$
     \label{lab:LSinEnd}
   \EndIf
   
 \EndIf
\EndFor

\For{$\ro{} \in S$}
\label{lab:LSoutStart}
   \State $cnt \leftarrow 0$
   
   \For{ $e \in$ \Call{OutgoingEdges}{$\ro{}$}}
      \If{$e.target \in S$}
        \State $cnt \leftarrow cnt + 1$
      \EndIf
   \EndFor
   
   \If{$cnt = 0$} 
     \State $\sm{n} \leftarrow$ \Call{CreateSM}{$S \setminus \{ \ro{} \}$}
     \State $N \leftarrow N \cup \{ \sm{n} \}$
     \label{lab:LSoutEnd}
   \EndIf
\EndFor

\Return $N$

\EndProcedure
\end{algorithmic}
\end{algorithm}

The general procedure is detailed in Algorithm~\ref{alg:localsearch}. 
At each iteration, if a neighbour has lower b than $M$, in other words, it is a more robust solution, the search continues by finding the neighbours of the new solution. 
A variable $\sm{best}$ is used to keep track of the best solution found so far in the whole search process.
We denote the best stable matching in set $N$ by $\sm{nb}$ and the function $\Call{Best}{$N$}$ is used to find the stable matching with the lowest b in set $N$.
Also a counter $cnt$ is used to count the number of iterations with no improved solutions.
The cut-off limit is bounded by a variable denoted by $lim_{cutoff}$ in the algorithm.
There is an iteration limit $lim_{depth}$ that indicates the depth of searching for neighbours of a randomly created stable matching.
After the iteration limit is met, a new random stable matching is generated and the neighbour search continues from that stable matching.

\begin{algorithm}
\caption{General Procedure for Local Search}\label{alg:localsearch}
\begin{algorithmic}[1]
\Procedure{LocalSearch}{}
\State $\sm{} \leftarrow $ \Call{CreateRandomSM}{$ $}
\State $\sm{best} \leftarrow \sm{}$
\State $cnt \leftarrow 0$

 \While {$time < lim_{time}$}
    \State $cnt \leftarrow 0$ 
    \State $iter_{depth} \leftarrow 0$ 
    \While {$iter_{depth} \leq lim_{depth}$}
        \State $cnt \leftarrow cnt + 1$
        \State $iter_{depth} \leftarrow iter_{depth} + 1$
        \State $N \leftarrow $\Call{FindNeighbors}{$\sm{}$}
        \State $\sm{nb} \leftarrow \Call{Best}{$N$}$
        \If {$\sm{nb}.b < \sm{}.b$}
            \State $\sm{} \leftarrow \sm{nb}$
            \If {$\sm{}.b < \sm{best}.b$}
            	\State $\sm{best} \leftarrow \sm{}$
           		\State $cnt \leftarrow 0$
            \EndIf
            
        \EndIf
        
       \If {$cnt = lim_{cutoff}$}
           \State return $\sm{best}$
       \EndIf
    \EndWhile
    
  	\State $\sm{} \leftarrow $ \Call{CreateRandomSM}{$ $}

\EndWhile
\State return $\sm{best}$
\EndProcedure
\end{algorithmic}
\end{algorithm}

Let us illustrate the neighbourhood search on the Stable Marriage instance given in Table~\ref{table:sm}. Assume that the search starts with a randomly generated stable matching, $\sm{5}$, where $S_5 = \{ \ro{0}, \ro{1}, \ro{2} \}$. The sets $L_{S_5}$ and $N_{S_5}$ are identified as follows: $L_{S_5} = \{ \ro{2} \}$, $N_{S_5} = \{ \ro{3}, \ro{4} \}$. Thus the current stable matching has 3 neighbours:\\
$S_5 \setminus \{ \ro{2} \} = \{ \ro{0}, \ro{1} \} = S_2$\\
$S_5 \cup \{ \ro{3} \} = \{ \ro{0}, \ro{1}, \ro{2}, \ro{3} \} = S_8$\\
$S_5 \cup \{ \ro{4} \} = \{ \ro{0}, \ro{1}, \ro{2}, \ro{4} \} = S_6$.

Next step is to calculate the robustness of each stable matching. Using the method described in Section~\ref{sec:verification}, the b values are calculated as follows: $\sm{5}.b = 3, \sm{2}.b = 3, \sm{8}.b = 3, \sm{6}.b = 1$. Since $\sm{6}$ is the most robust stable matching in the neighbourhood and better than the current stable matching $\sm{5}$, the neighbourhood search for the next step expands from $\sm{6}$.

	Notice that there can be at most $|\roset|$ neighbours of a stable matching. Thus, creating neighbours and finding their respective $b$ values is $O(n \times |\roset|^2)$.

\section{Experiments}
\label{sec:Experiments}

We experimentally evaluate the three models for finding the most robust stable matching. 
The CP model is implemented in Choco 4.0.1 ~\cite{chocoSolver} and the two meta-heuristics are implemented in Java. 
All experiments were performed on DELL M600 with 2.66 Ghz processors under Linux.



We ran each model with $4$ different randomization seeds for each instance. The time limit is fixed to 20 minutes for every run.   
An additional cut-off is used for local search (LS) and genetic algorithm (GA) as follows: 
if the solution quality does not improve for 10000 iterations, we terminate the search.
The genetic algorithm applies a crossover at each iteration unless the roulette wheel selection selects the fittest stable matching from the population. 
Additionally, the probability of applying mutation on a randomly selected stable matching is fixed as $80\%$. For local search, we chose to restart the local search with a randomly generated stable matching every 50 iterations. Last, the CP model (CP) uses the weighted degree heuristic~\cite{04-wdeg} with geometric restarts. 

We use two sets of random instances. 
The first set contains $500$ instances. The number of men for this set
is in the set $\{ 300 + 50*k \}$ where $k \in \{1,5\}$. 
The second set contains $600$ large instances of size $\{ 1200 + 50*k \}$ where $k \in \{1,6\}$. 
We generated $100$ instances for each size for both benchmarks.

In Figures~\ref{fig:optset1} and~\ref{fig:optset2} we plot the normalized objective value of the best solution found by the search model $h \in \{ {\rm CP, GA,LS} \}$ ($x$-axis) after a given time ($y$-axis). Let $h(I)$ be the objective value of the best solution found using model $h$ on instance $I$ and $lb(I)$ (resp. $ub(I)$) the lowest (resp. highest) objective value found by any model on $I$.
The formula below gives a normalized score in the interval $[0,1]$:
 
$$
    score(h,I)=
    \frac{ub(I) - h(I) + 1}{ub(I) - lb(I) + 1}
$$

The value of $score(h,I)$ is equal to $1$ if $h$ has found the best solution for this instance among all models, decreases as $h(I)$ gets further from the optimal objective value, and is equal to $0$ if and only if $h$ did not find any solution for $I$.

\begin{figure}[t]
	 \begin{center}
		\caption{\label{fig:optset1}Search Efficiency on the First Set of Instances}
 	\input{cactus_small_obj.tex}
	 \end{center}
\end{figure}

\begin{figure}[t]
	 \begin{center}
		\caption{\label{fig:optset2}Search Efficiency on Large Instances}
	\input{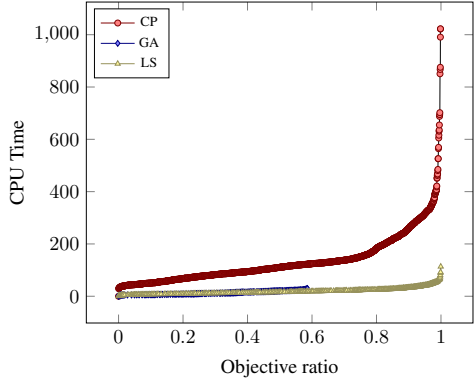}
	\end{center}
\end{figure}

Note that the CP model runs out of memory for large instances. Therefore, we do not plot it in Figure~\ref{fig:optset2}.



The outcome from both figures is clear. Local search is extremely efficient both in the quality of the solutions, and in runtime. Indeed, in the first plot, the best solutions are found by LS and CP. However, CP takes much longer time. Note that in Figure~\ref{fig:optset1}, CP and LS both find almost always the same objective, and in fact CP claims that the solution is optimal in all instances except one. 

The GA model does not seem to be well suited for this problem. 
In the first data set it does not find the best solutions in all instances. Moreover, it clearly takes much longer time than CP and LS for finding good quality solutions. 

The results in Figure~\ref{fig:optset2} are more spectacular for local search. 
Local search does not always find the best solutions since the normalised objective ratio does not exceed $90\%$. However, the overall performance is clearly better than GA in both quality and runtime.

\section{Conclusions}


We studied the notion of robustness in stable matching problems by using the notion of $(a,b)$-supermatch. 
We first showed that the problem of finding a stable matching $M_i$ that is closest to a given stable matching $M$ if a pair (man,woman) decides to break their match in $M$ can be found in polynomial time. 
Then, we used essentially this procedure to model the problem of finding the most robust stable matching using a CP formulation, local search, and genetic algorithm.
Last, we empirically evaluated these models on randomly generated instances and showed that local search is by far the best model to find robust solutions. 

To the best of our knowledge, this notion of robustness in stable matchings has never been proposed before. We hope that the proposed problem will get some attention in the future as it represents a challenge in real-world settings. 

\section*{Acknowledgments}

This publication has emanated from research conducted with the financial support of Science Foundation Ireland (SFI) under Grant Number SFI/12/RC/2289.

\appendix

\section{CP Model}
Below is an overview of the CP model for the Stable Marriage instance provided in Table~\ref{table:sm}.

As an example, find below the full CP model of the sample Stable Marriage instance given in Table~\ref{table:sm}.
The set of stable pairs, $SP = \{  \pair{0}{5}, $ $\pair{0}{2}, $ $\pair{0}{4}, $ $\pair{0}{1},$ $\pair{1}{4}, $ $\pair{1}{5}, $ $\pair{1}{3}, $ $\pair{2}{6}, $ $\pair{2}{0}, $ $\pair{3}{3}, $ $\pair{3}{5}, $ $\pair{4}{1}, $ $\pair{4}{4}, $ $\pair{5}{0}, $ $\pair{5}{4}, $ $\pair{6}{2}, $ $\pair{6}{5}, $ $\pair{6}{0}, $ $\pair{6}{6} \}$.

The constraints ensuring if a pair is a part of the solution by checking the existence of the rotations that produces and eliminates each pair:\\
$\x{0}{5} \leftrightarrow \neg{\s{0}}$\\
$\x{0}{2} \leftrightarrow \s{0} \wedge \neg{\s{2}}$\\
$\x{0}{4} \leftrightarrow \s{2} \wedge \neg{\s{3}}$\\
$\x{0}{1} \leftrightarrow \s{3}$\\
$\x{1}{4} \leftrightarrow \neg{\s{1}}$\\
$\x{1}{5} \leftrightarrow \s{1} \wedge \neg{\s{5}}$\\
$\x{1}{3} \leftrightarrow \s{5}$\\
$\x{2}{6} \leftrightarrow \neg{\s{4}}$\\
$\x{2}{0} \leftrightarrow \s{4}$\\
$\x{3}{3} \leftrightarrow \neg{\s{5}}$\\
$\x{3}{5} \leftrightarrow \s{5}$\\
$\x{4}{1} \leftrightarrow \neg{\s{3}}$\\
$\x{4}{4} \leftrightarrow \s{3}$\\
$\x{5}{0} \leftrightarrow \neg{\s{1}}$\\
$\x{5}{4} \leftrightarrow \s{1} \wedge \neg{\s{2}}$\\
$\x{5}{2} \leftrightarrow \s{2}$\\
$\x{6}{2} \leftrightarrow \neg{\s{0}}$\\
$\x{6}{5} \leftrightarrow \s{0} \wedge \neg{\s{1}}$\\
$\x{6}{0} \leftrightarrow \s{1} \wedge \neg{\s{4}}$\\
$\x{6}{6} \leftrightarrow \s{4}$\\

The constraints to make sure the solution forms a closed subset by requiring all predecessors of each rotation:\\
$\s{1} \rightarrow \s{0}$\\
$\s{2} \rightarrow \s{1}$\\
$\s{3} \rightarrow \s{2}$\\
$\s{4} \rightarrow \s{1}$\\
$\s{5} \rightarrow \s{4}$\\

The constraints that handle the case if a man is matched to his best possible partner:\\
$\alpha_{0} \leftrightarrow \x{0}{5}$\\
$\alpha_{1} \leftrightarrow \x{1}{4}$\\
$\alpha_{2} \leftrightarrow \x{2}{6}$\\
$\alpha_{3} \leftrightarrow \x{3}{3}$\\
$\alpha_{4} \leftrightarrow \x{4}{1}$\\
$\alpha_{5} \leftrightarrow \x{5}{0}$\\
$\alpha_{6} \leftrightarrow \x{6}{2}$\\

The constraints to state if a couple is no longer wanted, the rotation that produces the couple must be removed from the solution closed subset:\\
$\x{0}{2} \leftrightarrow \sUp{0}{0}$\\
$\x{0}{4} \leftrightarrow \sUp{2}{0}$\\
$\x{0}{1} \leftrightarrow \sUp{3}{0}$\\
$\x{1}{5} \leftrightarrow \sUp{1}{1}$\\
$\x{1}{3} \leftrightarrow \sUp{5}{1}$\\
$\x{2}{0} \leftrightarrow \sUp{4}{2}$\\
$\x{3}{5} \leftrightarrow \sUp{5}{3}$\\
$\x{4}{4} \leftrightarrow \sUp{3}{4}$\\
$\x{5}{4} \leftrightarrow \sUp{1}{5}$\\
$\x{5}{2} \leftrightarrow \sUp{2}{5}$\\
$\x{6}{5} \leftrightarrow \sUp{0}{6}$\\
$\x{6}{0} \leftrightarrow \sUp{1}{6}$\\
$\x{6}{6} \leftrightarrow \sUp{4}{6}$\\

The constraints that ensure there does not exist a repair matching $\sManUp{i}$ for man $i$ if he is matched to his best possible partner:\\
$\alpha_{0} \rightarrow \neg{\sUp{0}{0}} \wedge \neg{\sUp{1}{0}} \wedge \neg{\sUp{2}{0}} \wedge \neg{\sUp{3}{0}} \wedge \neg{\sUp{4}{0}} \wedge \neg{\sUp{5}{0}}$\\
$\alpha_{1} \rightarrow \neg{\sUp{0}{1}} \wedge \neg{\sUp{1}{1}} \wedge \neg{\sUp{2}{1}} \wedge \neg{\sUp{3}{1}} \wedge \neg{\sUp{4}{1}} \wedge \neg{\sUp{5}{1}}$\\
$\alpha_{2} \rightarrow \neg{\sUp{0}{2}} \wedge \neg{\sUp{1}{2}} \wedge \neg{\sUp{2}{2}} \wedge \neg{\sUp{3}{2}} \wedge \neg{\sUp{4}{2}} \wedge \neg{\sUp{5}{2}}$\\
$\alpha_{3} \rightarrow \neg{\sUp{0}{3}} \wedge \neg{\sUp{1}{3}} \wedge \neg{\sUp{2}{3}} \wedge \neg{\sUp{3}{3}} \wedge \neg{\sUp{4}{3}} \wedge \neg{\sUp{5}{3}}$\\
$\alpha_{4} \rightarrow \neg{\sUp{0}{4}} \wedge \neg{\sUp{1}{4}} \wedge \neg{\sUp{2}{4}} \wedge \neg{\sUp{3}{4}} \wedge \neg{\sUp{4}{4}} \wedge \neg{\sUp{5}{4}}$\\
$\alpha_{5} \rightarrow \neg{\sUp{0}{5}} \wedge \neg{\sUp{1}{5}} \wedge \neg{\sUp{2}{5}} \wedge \neg{\sUp{3}{5}} \wedge \neg{\sUp{4}{5}} \wedge \neg{\sUp{5}{5}}$\\
$\alpha_{6} \rightarrow \neg{\sUp{0}{6}} \wedge \neg{\sUp{1}{6}} \wedge \neg{\sUp{2}{6}} \wedge \neg{\sUp{3}{6}} \wedge \neg{\sUp{4}{6}} \wedge \neg{\sUp{5}{6}}$\\

The constraints that require if a rotation is in the difference set $S \setminus \sManUp{i}$, then all successors of that rotation that are a part of solution is also in the difference set when $i = 0$:
$\sUp{0}{0} \wedge \s{1} \rightarrow \sUp{1}{0}$\\
$\sUp{0}{0} \wedge \s{2} \rightarrow \sUp{2}{0}$\\
$\sUp{0}{0} \wedge \s{3} \rightarrow \sUp{3}{0}$\\
$\sUp{0}{0} \wedge \s{4} \rightarrow \sUp{4}{0}$\\
$\sUp{0}{0} \wedge \s{5} \rightarrow \sUp{5}{0}$\\
$\sUp{1}{0} \wedge \s{2} \rightarrow \sUp{2}{0}$\\
$\sUp{1}{0} \wedge \s{3} \rightarrow \sUp{3}{0}$\\
$\sUp{1}{0} \wedge \s{4} \rightarrow \sUp{4}{0}$\\
$\sUp{1}{0} \wedge \s{5} \rightarrow \sUp{5}{0}$\\
$\sUp{2}{0} \wedge \s{3} \rightarrow \sUp{3}{0}$\\
$\sUp{4}{0} \wedge \s{5} \rightarrow \sUp{5}{0}$\\

when $i = 1$\\
$\sUp{0}{1} \wedge \s{1} \rightarrow \sUp{1}{1}$\\
$\sUp{0}{1} \wedge \s{2} \rightarrow \sUp{2}{1}$\\
$\sUp{0}{1} \wedge \s{3} \rightarrow \sUp{3}{1}$\\
$\sUp{0}{1} \wedge \s{4} \rightarrow \sUp{4}{1}$\\
$\sUp{0}{1} \wedge \s{5} \rightarrow \sUp{5}{1}$\\
$\sUp{1}{1} \wedge \s{2} \rightarrow \sUp{2}{1}$\\
$\sUp{1}{1} \wedge \s{3} \rightarrow \sUp{3}{1}$\\
$\sUp{1}{1} \wedge \s{4} \rightarrow \sUp{4}{1}$\\
$\sUp{1}{1} \wedge \s{5} \rightarrow \sUp{5}{1}$\\
$\sUp{2}{1} \wedge \s{3} \rightarrow \sUp{3}{1}$\\
$\sUp{4}{1} \wedge \s{5} \rightarrow \sUp{5}{1}$\\

when $i = 2$\\
$\sUp{0}{2} \wedge \s{1} \rightarrow \sUp{1}{2}$\\
$\sUp{0}{2} \wedge \s{2} \rightarrow \sUp{2}{2}$\\
$\sUp{0}{2} \wedge \s{3} \rightarrow \sUp{3}{2}$\\
$\sUp{0}{2} \wedge \s{4} \rightarrow \sUp{4}{2}$\\
$\sUp{0}{2} \wedge \s{5} \rightarrow \sUp{5}{2}$\\
$\sUp{1}{2} \wedge \s{2} \rightarrow \sUp{2}{2}$\\
$\sUp{1}{2} \wedge \s{3} \rightarrow \sUp{3}{2}$\\
$\sUp{1}{2} \wedge \s{4} \rightarrow \sUp{4}{2}$\\
$\sUp{1}{2} \wedge \s{5} \rightarrow \sUp{5}{2}$\\
$\sUp{2}{2} \wedge \s{3} \rightarrow \sUp{3}{2}$\\
$\sUp{4}{2} \wedge \s{5} \rightarrow \sUp{5}{2}$\\

when $i = 3$\\
$\sUp{0}{3} \wedge \s{1} \rightarrow \sUp{1}{3}$\\
$\sUp{0}{3} \wedge \s{2} \rightarrow \sUp{2}{3}$\\
$\sUp{0}{3} \wedge \s{3} \rightarrow \sUp{3}{3}$\\
$\sUp{0}{3} \wedge \s{4} \rightarrow \sUp{4}{3}$\\
$\sUp{0}{3} \wedge \s{5} \rightarrow \sUp{5}{3}$\\
$\sUp{1}{3} \wedge \s{2} \rightarrow \sUp{2}{3}$\\
$\sUp{1}{3} \wedge \s{3} \rightarrow \sUp{3}{3}$\\
$\sUp{1}{3} \wedge \s{4} \rightarrow \sUp{4}{3}$\\
$\sUp{1}{3} \wedge \s{5} \rightarrow \sUp{5}{3}$\\
$\sUp{2}{3} \wedge \s{3} \rightarrow \sUp{3}{3}$\\
$\sUp{4}{3} \wedge \s{5} \rightarrow \sUp{5}{3}$\\

when $i = 4$\\
$\sUp{0}{4} \wedge \s{1} \rightarrow \sUp{1}{4}$\\
$\sUp{0}{4} \wedge \s{2} \rightarrow \sUp{2}{4}$\\
$\sUp{0}{4} \wedge \s{3} \rightarrow \sUp{3}{4}$\\
$\sUp{0}{4} \wedge \s{4} \rightarrow \sUp{4}{4}$\\
$\sUp{0}{4} \wedge \s{5} \rightarrow \sUp{5}{4}$\\
$\sUp{1}{4} \wedge \s{2} \rightarrow \sUp{2}{4}$\\
$\sUp{1}{4} \wedge \s{3} \rightarrow \sUp{3}{4}$\\
$\sUp{1}{4} \wedge \s{4} \rightarrow \sUp{4}{4}$\\
$\sUp{1}{4} \wedge \s{5} \rightarrow \sUp{5}{4}$\\
$\sUp{2}{4} \wedge \s{3} \rightarrow \sUp{3}{4}$\\
$\sUp{4}{4} \wedge \s{5} \rightarrow \sUp{5}{4}$\\

when $i = 5$\\
$\sUp{0}{5} \wedge \s{1} \rightarrow \sUp{1}{5}$\\
$\sUp{0}{5} \wedge \s{2} \rightarrow \sUp{2}{5}$\\
$\sUp{0}{5} \wedge \s{3} \rightarrow \sUp{3}{5}$\\
$\sUp{0}{5} \wedge \s{4} \rightarrow \sUp{4}{5}$\\
$\sUp{0}{5} \wedge \s{5} \rightarrow \sUp{5}{5}$\\
$\sUp{1}{5} \wedge \s{2} \rightarrow \sUp{2}{5}$\\
$\sUp{1}{5} \wedge \s{3} \rightarrow \sUp{3}{5}$\\
$\sUp{1}{5} \wedge \s{4} \rightarrow \sUp{4}{5}$\\
$\sUp{1}{5} \wedge \s{5} \rightarrow \sUp{5}{5}$\\
$\sUp{2}{5} \wedge \s{3} \rightarrow \sUp{3}{5}$\\
$\sUp{4}{5} \wedge \s{5} \rightarrow \sUp{5}{5}$\\

when $i = 6$\\
$\sUp{0}{6} \wedge \s{1} \rightarrow \sUp{1}{6}$\\
$\sUp{0}{6} \wedge \s{2} \rightarrow \sUp{2}{6}$\\
$\sUp{0}{6} \wedge \s{3} \rightarrow \sUp{3}{6}$\\
$\sUp{0}{6} \wedge \s{4} \rightarrow \sUp{4}{6}$\\
$\sUp{0}{6} \wedge \s{5} \rightarrow \sUp{5}{6}$\\
$\sUp{1}{6} \wedge \s{2} \rightarrow \sUp{2}{6}$\\
$\sUp{1}{6} \wedge \s{3} \rightarrow \sUp{3}{6}$\\
$\sUp{1}{6} \wedge \s{4} \rightarrow \sUp{4}{6}$\\
$\sUp{1}{6} \wedge \s{5} \rightarrow \sUp{5}{6}$\\
$\sUp{2}{6} \wedge \s{3} \rightarrow \sUp{3}{6}$\\
$\sUp{4}{6} \wedge \s{5} \rightarrow \sUp{5}{6}$\\

The constraints that ensure if a rotation is in the difference set $S \setminus \sManUp{i}$, they must be a part of the solution and they are either the rotation that produces the couple in the solution or they have a predecessor that produces the couple. For $v = \ro{0}$:\\
$\sUp{0}{0} \rightarrow \s{0} \wedge \x{0}{2}$\\
$\sUp{0}{1} \rightarrow \s{0}$\\
$\sUp{0}{2} \rightarrow \s{0}$\\
$\sUp{0}{3} \rightarrow \s{0}$\\
$\sUp{0}{4} \rightarrow \s{0}$\\
$\sUp{0}{5} \rightarrow \s{0}$\\
$\sUp{0}{6} \rightarrow \s{0} \wedge \x{6}{5}$\\

For $v = \ro{1}$:\\
$\sUp{1}{0} \rightarrow \s{1} \wedge \sUp{0}{0}$\\
$\sUp{1}{1} \rightarrow \s{1} \wedge (\x{1}{5} \vee \sUp{0}{1}) $\\
$\sUp{1}{2} \rightarrow \s{1} \wedge \sUp{0}{2}$\\
$\sUp{1}{3} \rightarrow \s{1} \wedge \sUp{0}{3}$\\
$\sUp{1}{4} \rightarrow \s{1} \wedge \sUp{0}{4}$\\
$\sUp{1}{5} \rightarrow \s{1} \wedge (\x{5}{4} \vee \sUp{0}{5})$\\
$\sUp{1}{6} \rightarrow \s{1} \wedge (\x{6}{0} \vee \sUp{0}{6})$\\

For $v = \ro{2}$:\\
$\sUp{2}{0} \rightarrow \s{2} \wedge (\x{0}{4} \vee \sUp{0}{0} \vee \sUp{1}{0})$\\
$\sUp{2}{1} \rightarrow \s{2} \wedge (\sUp{0}{1} \vee \sUp{1}{1}) $\\
$\sUp{2}{2} \rightarrow \s{2} \wedge (\sUp{0}{2} \vee \sUp{1}{2})$\\
$\sUp{2}{3} \rightarrow \s{2} \wedge (\sUp{0}{3} \vee \sUp{1}{3})$\\
$\sUp{2}{4} \rightarrow \s{2} \wedge (\sUp{0}{4} \vee \sUp{1}{4})$\\
$\sUp{2}{5} \rightarrow \s{2} \wedge (\x{5}{4} \vee \sUp{0}{5} \vee \sUp{1}{5})$\\
$\sUp{2}{6} \rightarrow \s{2} \wedge (\sUp{0}{6} \vee \sUp{1}{6})$\\

For $v = \ro{3}$:\\
$\sUp{3}{0} \rightarrow \s{3} \wedge (\x{0}{1} \vee \sUp{0}{0} \vee \sUp{1}{0} \vee \sUp{2}{0})$\\
$\sUp{3}{1} \rightarrow \s{3} \wedge (\sUp{0}{1} \vee \sUp{1}{1} \vee \sUp{2}{1}) $\\
$\sUp{3}{2} \rightarrow \s{3} \wedge (\sUp{0}{2} \vee \sUp{1}{2} \vee \sUp{2}{2})$\\
$\sUp{3}{3} \rightarrow \s{3} \wedge (\sUp{0}{3} \vee \sUp{1}{3} \vee \sUp{2}{3})$\\
$\sUp{3}{4} \rightarrow \s{3} \wedge (\x{4}{4}  \vee \sUp{0}{4} \vee \sUp{1}{4} \vee \sUp{2}{4})$\\
$\sUp{3}{5} \rightarrow \s{3} \wedge (\sUp{0}{5} \vee \sUp{1}{5} \vee \sUp{2}{5})$\\
$\sUp{3}{6} \rightarrow \s{3} \wedge (\sUp{0}{6} \vee \sUp{1}{6}  \vee \sUp{2}{6})$\\

For $v = \ro{4}$:\\
$\sUp{4}{0} \rightarrow \s{4} \wedge (\sUp{0}{0} \vee \sUp{1}{0})$\\
$\sUp{4}{1} \rightarrow \s{4} \wedge (\sUp{0}{1} \vee \sUp{1}{1}) $\\
$\sUp{4}{2} \rightarrow \s{4} \wedge (\x{2}{0} \vee \sUp{0}{2} \vee \sUp{1}{2})$\\
$\sUp{4}{3} \rightarrow \s{4} \wedge (\sUp{0}{3} \vee \sUp{1}{3})$\\
$\sUp{4}{4} \rightarrow \s{4} \wedge (\sUp{0}{4} \vee \sUp{1}{4})$\\
$\sUp{4}{5} \rightarrow \s{4} \wedge (\sUp{0}{5} \vee \sUp{1}{5})$\\
$\sUp{4}{6} \rightarrow \s{4} \wedge (\x{6}{6} \vee \sUp{0}{6} \vee \sUp{1}{6})$\\

For $v = \ro{5}$:\\
$\sUp{5}{0} \rightarrow \s{5} \wedge (\sUp{0}{0} \vee \sUp{1}{0} \vee \sUp{4}{0})$\\
$\sUp{5}{1} \rightarrow \s{5} \wedge (\x{1}{3} \vee \sUp{0}{1} \vee \sUp{1}{1} \vee \sUp{4}{1}) $\\
$\sUp{5}{2} \rightarrow \s{5} \wedge (\sUp{0}{2} \vee \sUp{1}{2} \vee \sUp{4}{2})$\\
$\sUp{5}{3} \rightarrow \s{5} \wedge (\x{3}{5} \vee \sUp{0}{3} \vee \sUp{1}{3} \vee \sUp{4}{3})$\\
$\sUp{5}{4} \rightarrow \s{5} \wedge (\sUp{0}{4} \vee \sUp{1}{4} \vee \sUp{4}{4})$\\
$\sUp{5}{5} \rightarrow \s{5} \wedge (\sUp{0}{5} \vee \sUp{1}{5} \vee \sUp{4}{5})$\\
$\sUp{5}{6} \rightarrow \s{5} \wedge (\sUp{0}{6} \vee \sUp{1}{6}  \vee \sUp{4}{6})$\\

The constraints for construction of the difference set $\sManDown{i} \setminus S$ can also be generated similar to the difference set constraints above. We do not include those constraints here for space limitations.

For the counting constraints, following are for counting which other men are required to change their couples in order to repair man $i$ for difference set $S \setminus \sManUp{i}$, when $i = 0$:\\
$\y{1}{0} \leftrightarrow \sUp{1}{0} \vee \sUp{5}{0}$\\
$\y{2}{0} \leftrightarrow \sUp{4}{0}$\\
$\y{3}{0} \leftrightarrow \sUp{5}{0}$\\
$\y{4}{0} \leftrightarrow \sUp{3}{0}$\\
$\y{5}{0} \leftrightarrow \sUp{1}{0} \vee \sUp{2}{0}$\\
$\y{6}{0} \leftrightarrow \sUp{0}{0} \vee \sUp{1}{0} \vee \sUp{4}{0}$\\

when $i = 1$:\\
$\y{0}{1} \leftrightarrow \sUp{0}{1} \vee \sUp{2}{1} \vee \sUp{3}{1}$\\
$\y{2}{1} \leftrightarrow \sUp{4}{1}$\\
$\y{3}{1} \leftrightarrow \sUp{5}{1}$\\
$\y{4}{1} \leftrightarrow \sUp{3}{1}$\\
$\y{5}{1} \leftrightarrow \sUp{1}{1} \vee \sUp{2}{1}$\\
$\y{6}{1} \leftrightarrow \sUp{0}{1} \vee \sUp{1}{1} \vee \sUp{4}{1}$\\

when $i = 2$:\\
$\y{0}{2} \leftrightarrow \sUp{0}{2} \vee \sUp{2}{2} \vee \sUp{3}{2}$\\
$\y{1}{2} \leftrightarrow \sUp{1}{2} \vee \sUp{5}{2}$\\
$\y{3}{2} \leftrightarrow \sUp{5}{2}$\\
$\y{4}{2} \leftrightarrow \sUp{3}{2}$\\
$\y{5}{2} \leftrightarrow \sUp{1}{2} \vee \sUp{2}{2}$\\
$\y{6}{2} \leftrightarrow \sUp{0}{2} \vee \sUp{1}{2} \vee \sUp{4}{2}$\\

when $i = 3$:\\
$\y{0}{3} \leftrightarrow \sUp{0}{3} \vee \sUp{2}{3} \vee \sUp{3}{3}$\\
$\y{1}{3} \leftrightarrow \sUp{1}{3} \vee \sUp{5}{3}$\\
$\y{2}{3} \leftrightarrow \sUp{4}{3}$\\
$\y{4}{3} \leftrightarrow \sUp{3}{3}$\\
$\y{5}{3} \leftrightarrow \sUp{1}{3} \vee \sUp{2}{3}$\\
$\y{6}{3} \leftrightarrow \sUp{0}{3} \vee \sUp{1}{3} \vee \sUp{4}{3}$\\

when $i = 4$:\\
$\y{0}{4} \leftrightarrow \sUp{0}{4} \vee \sUp{2}{4} \vee \sUp{3}{4}$\\
$\y{1}{4} \leftrightarrow \sUp{1}{4} \vee \sUp{5}{4}$\\
$\y{2}{4} \leftrightarrow \sUp{4}{4}$\\
$\y{3}{4} \leftrightarrow \sUp{5}{4}$\\
$\y{5}{4} \leftrightarrow \sUp{1}{4} \vee \sUp{2}{4}$\\
$\y{6}{4} \leftrightarrow \sUp{0}{4} \vee \sUp{1}{4} \vee \sUp{4}{4}$\\

when $i = 5$:\\
$\y{0}{5} \leftrightarrow \sUp{0}{5} \vee \sUp{2}{5} \vee \sUp{3}{5}$\\
$\y{1}{5} \leftrightarrow \sUp{1}{5} \vee \sUp{5}{5}$\\
$\y{2}{5} \leftrightarrow \sUp{4}{5}$\\
$\y{3}{5} \leftrightarrow \sUp{5}{5}$\\
$\y{4}{5} \leftrightarrow \sUp{3}{5}$\\
$\y{6}{5} \leftrightarrow \sUp{0}{5} \vee \sUp{1}{5} \vee \sUp{4}{5}$\\

when $i = 6$:\\
$\y{0}{6} \leftrightarrow \sUp{0}{6} \vee \sUp{2}{6} \vee \sUp{3}{6}$\\
$\y{1}{6} \leftrightarrow \sUp{1}{6} \vee \sUp{5}{6}$\\
$\y{2}{6} \leftrightarrow \sUp{4}{6}$\\
$\y{3}{6} \leftrightarrow \sUp{5}{6}$\\
$\y{4}{6} \leftrightarrow \sUp{3}{6}$\\
$\y{5}{6} \leftrightarrow \sUp{1}{6} \vee \sUp{2}{6}$\\

For difference set $\sManDown{i} \setminus S$, the constraints are the same except the boolean variables $\y{}{}$ is replaced with $\z{}{}$ and $\sUp{}{}$ is replaced with $\sDown{}{}$.

The constraints for counting how many other men are required to change their partners in order to provide a repair stable matching that dominates the current matching:\\
$\alpha_0 =$ true $\rightarrow \dup{0} = n$, 
else $\rightarrow \dup{0} = \y{1}{0} + \y{2}{0} + \y{3}{0} + \y{4}{0} + \y{5}{0} + \y{6}{0}$\\
$\alpha_1 =$ true $\rightarrow \dup{1} = n$, 
else $\rightarrow \dup{1} = \y{0}{1} + \y{2}{1} + \y{3}{1} + \y{4}{1} + \y{5}{1} + \y{6}{1}$\\
$\alpha_2 =$ true $\rightarrow \dup{2} = n$, 
else $\rightarrow \dup{2} = \y{0}{2} + \y{1}{2} + \y{3}{2} + \y{4}{2} + \y{5}{2} + \y{6}{2}$\\
$\alpha_3 =$ true $\rightarrow \dup{3} = n$, 
else $\rightarrow \dup{3} = \y{0}{3} + \y{1}{3} + \y{2}{3} + \y{4}{3} + \y{5}{3} + \y{6}{3}$\\
$\alpha_4 =$ true $\rightarrow \dup{4} = n$, 
else $\rightarrow \dup{4} = \y{0}{4} + \y{1}{4} + \y{2}{4} + \y{3}{4} + \y{5}{4} + \y{6}{4}$\\
$\alpha_5 =$ true $\rightarrow \dup{5} = n$, 
else $\rightarrow \dup{5} = \y{0}{5} + \y{1}{5} + \y{2}{5} + \y{3}{5} + \y{4}{5} + \y{6}{5}$\\
$\alpha_6 =$ true $\rightarrow \dup{6} = n$, 
else $\rightarrow \dup{6} = \y{0}{6} + \y{1}{6} + \y{2}{6} + \y{3}{6} + \y{4}{6} + \y{5}{6}$\\

Similarly, the repairs required for the dominated stable matching:
$\beta_0 =$ true $\rightarrow \ddown{0} = n$, 
else $\rightarrow \ddown{0} = \z{1}{0} + \z{2}{0} + \z{3}{0} + \z{4}{0} + \z{5}{0} + \z{6}{0}$\\
$\beta_1 =$ true $\rightarrow \ddown{1} = n$, 
else $\rightarrow \ddown{1} = \z{0}{1} + \z{2}{1} + \z{3}{1} + \z{4}{1} + \z{5}{1} + \z{6}{1}$\\
$\beta_2 =$ true $\rightarrow \ddown{2} = n$, 
else $\rightarrow \ddown{2} = \z{0}{2} + \z{1}{2} + \z{3}{2} + \z{4}{2} + \z{5}{2} + \z{6}{2}$\\
$\beta_3 =$ true $\rightarrow \ddown{3} = n$, 
else $\rightarrow \ddown{3} = \z{0}{3} + \z{1}{3} + \z{2}{3} + \z{4}{3} + \z{5}{3} + \z{6}{3}$\\
$\beta_4 =$ true $\rightarrow \ddown{4} = n$, 
else $\rightarrow \ddown{4} = \z{0}{4} + \z{1}{4} + \z{2}{4} + \z{3}{4} + \z{5}{4} + \z{6}{4}$\\
$\beta_5 =$ true $\rightarrow \ddown{5} = n$, 
else $\rightarrow \ddown{5} = \z{0}{5} + \z{1}{5} + \z{2}{5} + \z{3}{5} + \z{4}{5} + \z{6}{5}$\\
$\beta_6 =$ true $\rightarrow \ddown{6} = n$, 
else $\rightarrow \ddown{6} = \z{0}{6} + \z{1}{6} + \z{2}{6} + \z{3}{6} + \z{4}{6} + \z{5}{6}$\\

The constraint for measuring the robustness of the solution:\\
$\big( \neg(\alpha_0) \rightarrow ( b \geq \dup{0}) \big) \vee \big( \neg(\beta_0) \rightarrow ( b \geq \ddown{0}) \big)$\\
$\big( \neg(\alpha_1) \rightarrow ( b \geq \dup{1}) \big) \vee \big( \neg(\beta_1) \rightarrow ( b \geq \ddown{1}) \big)$\\
$\big( \neg(\alpha_2) \rightarrow ( b \geq \dup{2}) \big) \vee \big( \neg(\beta_2) \rightarrow ( b \geq \ddown{2}) \big)$\\
$\big( \neg(\alpha_3) \rightarrow ( b \geq \dup{3}) \big) \vee \big( \neg(\beta_3) \rightarrow ( b \geq \ddown{3}) \big)$\\
$\big( \neg(\alpha_4) \rightarrow ( b \geq \dup{4}) \big) \vee \big( \neg(\beta_4) \rightarrow ( b \geq \ddown{4}) \big)$\\
$\big( \neg(\alpha_5) \rightarrow ( b \geq \dup{5}) \big) \vee \big( \neg(\beta_5) \rightarrow ( b \geq \ddown{5}) \big)$\\
$\big( \neg(\alpha_6) \rightarrow ( b \geq \dup{6}) \big) \vee \big( \neg(\beta_6) \rightarrow ( b \geq \ddown{6}) \big)$\\


\end{document}